\newif\iffullVer
\renewcommand{\t}{\mathsf{t}}
\renewcommand{\Case}{\mathrm{C}}
\newcommand{\Otilde}{\wt{O}}
\newcommand{\PrTam}{\mathsf{PT}}
\newcommand{\metric}{{{\mathsf{d}}}}
\newcommand{\Levy}{Lévy\xspace}
\newcommand{\HD}{\mathsf{HD}}
\newcommand{\OnlTam}{\mathsf{On}\Tam}
\newcommand{\OnTam}{\OnlTam}
\newcommand{\OffTam}{\mathsf{Off}\Tam}
\newcommand{\X}{\cX} 
\newcommand{\Y}{\cY} 
\newcommand{\C}{\cC}
\renewcommand{\H}{\cH}
\newcommand{\error}{\mathsf{Err}}
\newcommand{\Error}{\error}
\newcommand{\Conf}{\mathsf{Conf}}
\newcommand{\conf}{\Conf}
\newcommand{\Risk}{\mathsf{Risk}}
\newcommand{\Tam}{\mathsf{Tam}}
\newcommand{\xDist}{\mathbf{x}}
\newcommand{\tDist}{\mathbf{t}}
\newcommand{\TDist}{\mathbf{T}}
\newcommand{\uDist}{\mathbf{u}}
\newcommand{\vDist}{\mathbf{v}}
\newcommand{\tDistVec}{\ol{\tDist}}
\newcommand{\uDistVec}{\ol{\uDist}}
\newcommand{\vDistVec}{\ol{\vDist}}
\newcommand{\insDist}{\xDist}
\newcommand{\instDist}{\insDist}
\newcommand{\uVec}{\ol{u}}
\newcommand{\vVec}{\ol{v}}
\renewcommand{\th}{^\mathrm{th}}
\newcommand{\twist}[2]{\langle #1 \,\, \| \, {#2}\rangle}
\newcommand{\pfix}[2]{ {#1}_{\leq #2}}
\newcommand{\Exists}{\exists\,}
\newcommand{\fDist}{\mathbf{f}}
\newcommand{\aSF}{\mathsf{a}}
\newcommand{\gSF}{\mathsf{g}}
\newcommand{\hSF}{\mathsf{h}}
\newcommand{\avr}[2]{\ifthenelse{\equal{#2}{}}{\aSF({#1})}{\ifthenelse{\equal{#2}{0}}{\aSF(\emptyset)}{\aSF({#1}_{\leq #2})}}}
\newcommand{\avrMax}[2]{\ifthenelse{\equal{#2}{}}{\aSF^*({#1})}{\ifthenelse{\equal{#2}{0}}{\aSF^*(\emptyset)}{\aSF^*({#1}_{\leq #2})}}}
\newcommand{\avrApp}[2]{\ifthenelse{\equal{#2}{}}{\tilde{\aSF}({#1})}{\ifthenelse{\equal{#2}{0}}{\tilde{\aSF}(\emptyset)}{\tilde{\aSF}({#1}_{\leq #2})}}}
\newcommand{\avrAppMax}[2]{\ifthenelse{\equal{#2}{}}{\tilde{\aSF}^*({#1})}{\ifthenelse{\equal{#2}{0}}{\tilde{\aSF}^*(\emptyset)}{\tilde{\aSF}^*({#1}_{\leq #2})}}}
\newcommand{\ArgMax}[2]{\ifthenelse{\equal{#2}{}}{\hSF({#1})}{\ifthenelse{\equal{#2}{0}}{\hSF(\emptyset)}{\hSF({#1}_{\leq #2})}}}
\newcommand{\AppArgMax}[2]{\ifthenelse{\equal{#2}{}}{\tilde{\hSF}({#1})}{\ifthenelse{\equal{#2}{0}}{\tilde{\hSF}(\emptyset)}{\tilde{\hSF}({#1}_{\leq #2})}}}
\newcommand{\Err}{\mathcal{E}}
\newcommand{\gain}[2]{\ifthenelse{\equal{#2}{}}{\gSF(#1)}{\gSF(#1_{\leq #2})}}
\newcommand{\gainMax}[2]{\ifthenelse{\equal{#2}{}}{\gSF^*(#1)}{\gSF^*(#1_{\leq #2})}}
\newcommand{\gainApp}[2]{\ifthenelse{\equal{#2}{}}{\tilde{\gSF}(#1)}{\tilde{\gSF}(#1_{\leq #2})}}
\newcommand{\gainAppMax}[2]{\ifthenelse{\equal{#2}{}}{\tilde{\gSF}^*(#1)}{\tilde{\gSF}^*(#1_{\leq #2})}}
\newcommand{\pr}[2][]{\Pr_{\ifthenelse{\isempty{#1}}{}{{#1}}}\left[{#2}\right]}
\newcommand{\IdealTam}{\mathsf{IdTam}}
\newcommand{\EffTam}{\mathsf{EffTam}}
\newcommand{\AppTam}{\mathsf{AppTam}}
\newcommand{\problem}{\ensuremath{\mathcal{P}}\xspace}
\newcommand{\e}{\mathrm{e}}
\newcommand{\remove}[1]{}
\newcommand{\ol}{\overline}
\newcommand{\wt}[1]{\widetilde{#1}}
\newcommand{\se}{\subseteq}
\newcommand{\set}[1]{\left\{ #1 \right\}}
\newcommand{\bits}{\{0,1\}}
\newcommand{\To}{\mapsto}
\newcommand{\R}{{\mathbb R}}
\newcommand{\N}{{\mathbb N}}
\newcommand{\trainTam}{\vVec}
\newcommand{\Adv}{\mathsf{A}}
\newcommand{\train}{\uVec}
\newcommand{\cC}{{\mathcal C}}
\newcommand{\cE}{{\mathcal E}}
\newcommand{\cH}{{\mathcal H}}
\newcommand{\cP}{{\mathcal P}}
\newcommand{\cS}{{\mathcal S}}
\newcommand{\cX}{{\mathcal X}}
\newcommand{\cY}{{\mathcal Y}}
\newcommand{\eps}{\varepsilon}
\newcommand{\poly}{\operatorname{poly}}
\newcommand{\polylog}{\operatorname{polylog}}
\newcommand{\Exp}{\operatorname*{\mathbb{E}}}
\newcommand{\Ex}{\Exp}
\newcommand{\Supp}{\operatorname{Supp}}
\newcommand{\argmax}{\operatorname*{argmax}}
\newtheorem{theorem}{Theorem}[section]
\theoremstyle{plain}
\newtheorem{claim}[theorem]{Claim}
\newtheorem{lemma}[theorem]{Lemma}
\theoremstyle{definition}
\newtheorem{definition}[theorem]{Definition}
\newtheorem{construction}[theorem]{Construction}
\theoremstyle{definition}
\newtheorem{remark}[theorem]{Remark}
\newcommand{\sdotfill}{\textcolor[rgb]{0.8,0.8,0.8}{\dotfill}} 
\def\th@protocol{%
    \normalfont 
    \setbeamercolor{block title example}{bg=orange,fg=white}
    \setbeamercolor{block body example}{bg=orange!20,fg=black}
    \def\inserttheoremblockenv{exampleblock}
  }
\theoremstyle{protocol}
\newtheorem{proto}[theorem]{Protocol}
\newtheorem{protoc}[theorem]{Protocol}
\newcommand{\namedref}[2]{#1~\ref{#2}}
\newcommand{\torestate}[3]{%
\expandafter \def \csname BBRESTATE #2 \endcsname{#3}
\theoremstyle{plain}
\newtheorem{BBRESTATETHMNUM#2}[theorem]{#1}
\begin{BBRESTATETHMNUM#2}\label{#2}\csname BBRESTATE #2 \endcsname   \end{BBRESTATETHMNUM#2}
\newtheorem*{BBRESTATETHMNONNUM#2}{\namedref{#1}{#2}}
}
\newcommand{\restate}[1]{\begin{BBRESTATETHMNONNUM#1}[Restated] \csname BBRESTATE #1 \endcsname
\end{BBRESTATETHMNONNUM#1}}
\title{Can Adversarially Robust Learning Leverage \\ Computational Hardness? 
}
\author{Saeed Mahloujifar\thanks{\texttt{saeed@virginia.edu}, University of Virginia. Supported by University of Virginia's SEAS Research Innovation Awards.} \and Mohammad Mahmoody\thanks{\texttt{mohammad@virginia.edu},  University of Virginia. Supported by NSF CAREER award CCF-1350939,   and  University of Virginia's SEAS Research Innovation Awards.}}
\newcommand{\Mnote}[1]{{\color{red} [\bf {Mohammad:}  #1]}}
\newcommand{\Snote}[1]{{\color{blue} [\bf {Saeed:}  #1]}}
\newcommand{\Mnote}[1]{}
\newcommand{\Snote}[1]{}
\begin{document}
\maketitle

\begin{abstract}
Making learners robust to adversarial perturbation at test time (i.e., evasion attacks) or training time (i.e., poisoning attacks) has emerged as a challenging task. It is known that for some natural settings, \emph{sublinear} perturbations in the  training phase or the testing phase can drastically decrease the quality of the predictions. These negative results, however, are \emph{information theoretic} and only prove the \emph{existence} of such successful adversarial perturbations. A natural question for these settings is whether or not we can make classifiers \emph{computationally} robust to \emph{polynomial-time} attacks.

In this work, we prove strong barriers against achieving such envisioned computational robustness both for evasion and poisoning attacks. In particular,  we show that if the test instances come from a product distribution (e.g., uniform over $\{0,1\}^n$ or $[0,1]^n$, or isotropic $n$-variate Gaussian) and that  there is an initial constant error, then there exists a \emph{polynomial-time} attack that finds adversarial examples of Hamming distance $O(\sqrt n)$. 
For poisoning attacks, we prove that for any   learning algorithm with sample complexity $m$ and any efficiently computable ``predicate'' defining some ``bad'' property $B$ for the produced hypothesis (e.g., failing on a particular test) that happens with an initial constant probability, there  exist \emph{polynomial-time} online poisoning attacks that tamper with $O (\sqrt m)$ many examples, replace them with other correctly labeled examples, and increases the probability of the bad event $B$ to  $\approx 1$. 

Both of our poisoning and  evasion attacks are   \emph{black-box} in how they access  their corresponding components of the system (i.e., the hypothesis, the concept, and the learning algorithm) and make no further assumptions about the classifier or the learning algorithm producing the classifier.

\end{abstract}

\iffullVer
\newpage
\else
\fi
\newpage
\tableofcontents

\section{Introduction}

Making trained classifiers robust to adversarial attacks of various forms has been an active line of research in machine learning recently. Two major forms of attack are the so called ``evasion'' and ``poisoning'' attacks. In an evasion attack, an adversary enters the game during the test phase and tries to perturb the original test instance $x$ into a ``close'' adversarial instance $x'$ that is misclassified by the produced hypothesis (a.k.a. trained model) $h$. In a poisoning attack, the adversary manipulates the training data  into a ``closely related'' poisoned version  with the goal of increasing the risk (or some other closely related property such as failing on a particular example) of the  hypothesis $h$ produced based on the poisoned data.
Starting with  Szegedy et al.~\cite{Szegedy:intriguing}  a race has emerged between evasion attacks that aim to find classified adversarial examples and defences against those attacks~\cite{Evasion:TestTime,biggio2014security,Adversarial::Harnessing,Defenses:Distillation,CarliniWagner,Adversarial::FeatureSqueezing,athalye2018obfuscated}. In another line of work, many papers studied poisoning attacks and defense mechanisms against them~\cite{biggio2012poisoning,awasthi2014powerjournal,xiao2015feature,papernot2016towards,rubinstein2009antidote,shafahi2018poison,koh2017understanding,burkard2017analysis,charikar2017learning,diakonikolas2017statistical,Mahloujifar2018:ALT,diakonikolas2018list,diakonikolas2018sever,prasad2018robust,diakonikolas2018efficient,diakonikolas2017being,diakonikolas2018robustly}. Although, some specific problems (e.g., that of image classification) naturally has got more attention in this line of work, like other works in the theory literature, we approach the robustness problem from a general and fundamental perspective.

\paragraph{Is adversarially robust classification  possible?} Recently, started by Gilmer et el.~\cite{gilmer2018adversarial} and followed by~\cite{fawzi2018adversarial,Adversarial:NIPS,inevitable,mahloujifar2018curse}, it was shown that for many natural metric probability spaces of instances (e.g., uniform distribution over $\bits^n, [0,1]^n$, unit $n$-sphere, or isotropic Gaussian in dimension $n$, all with ``normalized'' Euclidean or Hamming distance) adversarial examples of sublinear perturbations exist for almost all test instances. Indeed, as shown by Mahloujifar, Diochnos, and Mahmoody~\cite{mahloujifar2018curse}, if the instances are drawn from any ``normal \Levy family'' \cite{milman1986asymptotic} of metric probability spaces (that include all the above-mentioned examples), and if there exists an initial non-negligible risk for the generated hypothesis classifier $h$, an adversary can perturb an initial instance $x$ into an adversarial one $x'$ that is only $\approx \sqrt{n}$-far (which is sublinear in $n$) from $x$ and that $x'$ is misclassified. 

In the context of poisoning attacks, some classic results about malicious noise \cite{Valiant::DisjunctionsConjunctions,KearnsLi::Malicious,NastyNoise} could be interpreted as limitations of learning under poisoning. On the positive side, the recent  breakthroughs of Diakonikolas et al.~\cite{diakonikolas2016robust} and Lia et al.~\cite{lai2016agnostic} showed the surprising power of robust inference over poisoned data in \emph{polynomial-time} with error that does \emph{not} depend on the dimension of the instances. These works led to an active line of work (e.g., see \cite{charikar2017learning,diakonikolas2017statistical,diakonikolas2018list,diakonikolas2018sever,prasad2018robust,diakonikolas2018efficient}) exploring the possibility of robust statistics over poisoned data with algorithmic guarantees. In particular \cite{charikar2017learning,diakonikolas2018list} showed how to perform \emph{list-docodable} learning; outputting a set of hypothesis one of which is of high quality, and \cite{diakonikolas2018sever,prasad2018robust} studied robust stochastic convex optimization. 

Studying the power of poisoning \emph{attacks},  the work of Mahloujifar et al.~\cite{mahloujifar2018curse} demonstrated the power  of  poisoning attacks of various forms as long as there is a ``small but non-negligible vulnerability'' in  no attack setting. Namely, assuming that the goal of the adversary is to increase the probability of any ``bad event'' $B$  over the generated hypothesis, it was proved in \cite{mahloujifar2018curse} that the adversary can always increase the probability of $B$ from any non-negligible (or at least sub-exponentially large) probability to $\approx 1$ using sublinear perturbations of the training data. In particular, the adversary can decrease the confidence of the produced hypothesis (to have error at most $\eps$ for a fixed $\eps$), or alternatively it can increase the classification error of a particular instance $x$,  using an adversarial poisoning strategy that achieves these goals by changing $\approx \sqrt m$ of the training examples, where $m$ is the sample complexity of the learner.


\paragraph{Is \emph{computationally} robust classification possible?}  All the above-mentioned sublinear-perturbation attacks of~\cite{fawzi2018adversarial,Adversarial:NIPS,inevitable,mahloujifar2018curse}, in both evasion and poisoning models, were \emph{information theoretic} (i.e., \emph{existential}). Namely, they only show the {existence} of such adversarial instances for evasion attacks or that they show the existence of such adversarial poisoned data with sublinear perturbations for poisoning attacks. In this work, we study the next natural question; can we overcome these information theoretic (existential) lower bounds by relying on the fact that the adversary is computationally bounded? Namely, can we design solutions that resist  \emph{polynomial-time} attacks on the robustness of the learning algorithms? More specifically, the general question studied in our work is as follows.

\begin{quote}
    \emph{Can we make classifiers robust to \emph{computationally bounded} adversarial perturbations  (of sublinear magnitude) that occur during the training  or the test phase?}
\end{quote}

In this work we focus on sublinear perturbations as our main results are \emph{negative} (i.e., demonstrating the power of sublinear tampering).\footnote{For any result proved on the positive side, e.g., it would be stronger to resist even a \emph{linear} amount of perturbations. }

\subsection{Our Results}

In this work, we prove strong barriers against basing the robustness of classifiers, in both evasion  and poisoning settings, on computational intractability. Namely, we show that in many natural settings (i.e., any problem for which  the instances are drawn from a product distribution and that their distances are measured by Hamming distance) adversarial examples could be found in \emph{polynomial time}. This result applies to any learning task over these distributions. In the poisoning attacks' setting, we show that  for any learning task and any distribution over the labeled instances, if the goal of the adversary is to decrease the confidence of the learner or to increase its error on any particular instance $x$, it can always do so in polynomial time by only changing $\approx \sqrt m$ of the labeled instances and replacing them with yet correctly labeled examples. Below we describe both of these results at a high level.

\begin{theorem}[Informal: polynomial-time evasion attacks] \label{thm:EvRob-Inf}
Let $\problem$ be a  classification problem in which the test instances are drawn from a product distribution $\instDist \equiv \uDist_1 \times \dots \times \uDist_n$. 
Suppose $c$ is a concept function (i.e.,  ground truth) and $h$ is a hypothesis that has a constant $\Omega(1)$ error in predicting $c$. Then, there is a \emph{polynomial-time} (black-box) adversary  that perturbs only  $\approx O(\sqrt n)$ of the \emph{blocks} of the instances  and make them misclassified with probability $\approx 1$.
 \end{theorem}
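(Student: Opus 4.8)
The plan is to reduce the statement to a purely algorithmic core and then make a known information-theoretic attack efficient. Let $E := \{z : h(z) \ne c(z)\}$ be the error region of $h$ relative to $c$; by hypothesis $\mu(E) = \Omega(1)$ where $\mu := \uDist_1 \times \dots \times \uDist_n$, and one pair of black-box oracle calls to $h$ and $c$ decides membership in $E$. Since $h$ misclassifies $z'$ exactly when $z' \in E$, the theorem is equivalent to: with sample access to $\mu$ and a membership oracle for $E$, build a $\poly(n)$-time adversary that, on a random $x \sim \mu$, with probability $1-o(1)$ outputs $x' \in E$ at block-Hamming distance $O(\sqrt n)$ from $x$.

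The \emph{existence} of such an $x'$ for almost every $x$ is already known \cite{mahloujifar2018curse}: the map $z \mapsto d(z,E)$ (block-Hamming distance to $E$) is $1$-Lipschitz in each block, so McDiarmid's bounded-differences inequality gives concentration around its mean with $O(\sqrt n)$ fluctuations; together with $\Pr_x[d(x,E)=0] = \mu(E) = \Omega(1)$ this forces $\Ex_x[d(x,E)] = O(\sqrt{n\log(1/\eps)})$ and hence $d(x,E) = O(\sqrt n)$ for a $1-o(1)$ fraction of $x$. All the work is thus in finding such a point of $E$ \emph{efficiently}, which the concentration argument does not help with: it exhibits neither the blocks to change nor the values to change them to.

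For efficiency I would use a sequential, block-by-block ``rejection-sampling'' tampering attack with budget $k = O(\sqrt n)$. Processing blocks $i = 1,\dots,n$ and maintaining a prefix $y_{\le i}$ of the output, set $\mu_E(y_{\le i}) := \Pr_{z \sim \uDist_{>i}}[(y_{\le i},z)\in E]$; along the \emph{un-tampered} process this is a $[0,1]$-valued martingale with $\mu_E(\emptyset)=\eps$ and $\mu_E(y_{\le n}) = \mathbf 1[y\in E]$. At block $i$, the adversary draws $\poly(n,1/\eps)$ completions from $\uDist_{>i}$ and uses the membership oracle to estimate $\mu_E(y_{\le i-1},x_i)$ for the real value and $\mu_E(y_{\le i-1},v)$ for a few freshly sampled alternatives $v\sim\uDist_i$; it keeps $y_i=x_i$ unless switching to the best alternative improves the estimated $\mu_E$ by at least a calibrated amount, in which case it switches and spends one unit of budget. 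This turns $\mu_E(y_{\le i})$ into a submartingale that is pushed up toward $1$, so the output $y$ lands in $E$. An equivalent, analysis-friendly repackaging is to run the rejection-sampling attack on a $p$-tampering schedule with $p=\Theta(1/\sqrt n)$: then $\Theta(\sqrt n)$ blocks are tampered with high probability (Chernoff), while the known bounds on how much a rejection-sampling $p$-tampering attack biases a bounded function over a product space push $\Ex[\mathbf 1_E]$ from $\Omega(1)$ to $1-o(1)$.

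The crux, I expect, is two intertwined points. First, the \emph{change count}: the number of touched blocks must be charged against $\sum_i \Ex\big[\,|\mu_E(y_{\le i})-\mu_E(y_{\le i-1})|\,\big]$, which for this bounded martingale is $O(\sqrt n)$ by Cauchy--Schwarz on its $O(1)$ quadratic variation (this is the true source of the $\sqrt n$); a naive ``switch whenever the real value hurts'' rule need not respect this, so the switching rule has to be calibrated so that each switch is amortized against a matching drop in $\mu_E$ that it prevents. Second, \emph{sampling error}: the conditional probabilities $\mu_E(\cdot)$ can be as small as $2^{-\Omega(n)}$, so the sampled estimates are trustworthy only when they are not too small, and the $\poly(n)$ many additive errors across the $n$ rounds must not compound; I would handle this by arranging that the attack always has a safe alternative keeping $\mu_E$ bounded below (so it never enters the untrustworthy regime), and by a union bound over the event that every per-round estimate is $1/\poly(n)$-accurate, using the slack between the attainable $O(\sqrt n)$ and the information-theoretic optimum $O(\sqrt{n\log(1/\eps)})$.
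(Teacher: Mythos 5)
Your main route is essentially the paper's: reduce to biasing the indicator of the error region over the product distribution, run an online block-by-block tampering attack that estimates the conditional success probabilities $\mu_E(y_{\le i})$ by sampling random continuations and querying the membership oracle, prove the bias via an approximate-martingale version of Azuma's inequality (union-bounding the per-round estimation errors), and charge the $O(\sqrt n)$ budget against the bounded total drift of the process with a threshold calibrated to $\Theta(1/\sqrt n)$. The paper's precise switching rule is the two-case one you gesture at: tamper if (a) some resampled alternative raises the conditional probability by at least $\tau$, or (b) the \emph{actual} incoming block lowers it by at least $\tau$, in which case it is replaced by a roughly neutral alternative; case (b), which inspects the untampered value, is exactly the calibration needed so that every switch is amortized against a gain (or prevented drop) of $\ge \tau$, yielding budget $\le (1-\mu)/\tau$. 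Your "safe alternative keeping $\mu_E$ bounded below" and the accuracy union bound also match the paper's treatment of the sampled oracles.

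One element of your plan is wrong, though it is offered only as an "equivalent repackaging": running a rejection-sampling attack on a $p$-tampering schedule with $p=\Theta(1/\sqrt n)$ does \emph{not} push $\Ex[\mathbf 1_E]$ from $\Omega(1)$ to $1-o(1)$. The known $p$-tampering bounds give a bias gain of only $O(\mu p)=O(1/\sqrt n)$, which is precisely why those earlier attacks needed a \emph{linear} number of tamperings to reach constant error; the whole point of the threshold/greedy attack is to spend the budget adaptively on the $O(\sqrt n)$ blocks where the conditional probability can move by $\ge \tau$, rather than on a random $p$-fraction of blocks. So drop that alternative and commit to the threshold-based rule.
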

 
 (See Theorem~\ref{thm:EvRob}  for the formal version of the following theorem.)
 
The above theorem covers many natural distributions such as uniform distributions over $\bits^n$ or $[0,1]^n$ or the isotropic Gaussian of dimension $n$, so long as the distance measure is Hamming distance.
Also, as we will see in Theorem~\ref{thm:EvRob}, the initial error necessary for our polynomial-time evasion attack could be as small as $1/\poly(\log n)$ to keep the perturbations $\Otilde(\sqrt n)$, and even initial error $\omega(\log n /\sqrt{ n})$ is enough to keep the perturbations sublinear $o(n)$. Finally, by ``black-box'' we mean that our attacker only needs oracle access to the hypothesis $h$, the ground truth $c$, and distribution $\instDist$.\footnote{As mentioned,  we need to give our adversary oracle access to a sampler for the instance distribution $\xDist$ as well, though this distribution is usually polynomial-time samplable.} This black-box condition is similar to the one defined in previous work of Papernot et al.~\cite{papernot2017practical}, however the notion of black box in some other works (e.g., see~\cite{ilyas2018black}) are more relaxed and give some additional data, such as a vector containing probabilities assigned to each label, to the adversary as well.

We also note that, even though \emph{learning} is usually done with respect to a \emph{family} of distributions (e.g., all distributions), working with a particular distribution in our \emph{negative} results make them indeed \emph{stronger}.

We now describe our main result about polynomial-time poisoning attacks.
See Theorem~\ref{thm:PoRob}  for the formal version of the following theorem.
\begin{theorem}[Informal: polynomial-time poisoning attacks] \label{thm:PoRob-Inf}
Let $\problem$ be a  classification problem with a deterministic learner $L$ that is given $m$ labeled examples of the form $(x,c(x))$ for a concept function $c$ (determining the ground truth). 
\begin{itemize}
\item {\bf Decreasing confidence.} For any risk threshold $\eps \in [0,1]$, let $\rho$ be the probability that $L$ produces a hypothesis of risk at most $\eps$, referred to as the $\eps$-confidence of $L$. If $\rho$ is at most $1-\Omega(1)$, then there is a \emph{polynomial-time} adversary that replaces at most $\approx O(\sqrt m)$ of the training examples with other correctly classified examples and makes the $\eps$-confidence go down to any constant $O(1)\approx 0$.
\item {\bf Increasing chosen-instance\footnote{Poisoning attacks in which the instance is chosen are also called \emph{targeted} \cite{barreno2006can}.} error.} For any fixed test instance $x$, if the average error of the hypotheses generated by $L$ over instance $x$ is at least  $\Omega(1)$, then there is a \emph{polynomial-time} adversary  that replaces at most $\approx O(\sqrt m)$ of the training examples with other correctly classified examples and increases this average error to any constant $ \approx 1$.
\end{itemize}
Moreover, both attacks above are \emph{online} and  \emph{black-box}.
\end{theorem}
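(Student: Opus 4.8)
The plan is to reduce the poisoning problem to a concentration-of-measure phenomenon over the product space of training samples, in the spirit of the biased-coin / hypergeometric-tail approach used in the information-theoretic predecessor \cite{mahloujifar2018curse}, but now implemented by a \emph{polynomial-time} attacker. Fix the learner $L$, its sample complexity $m$, and an efficiently computable predicate $B$ defining the bad event (either ``$L$'s output has risk $>\eps$'' for the confidence attack, or ``$L$'s output misclassifies the fixed instance $x$'' for the targeted attack); in both cases $\Pr[B] \ge \mu = \Omega(1)$ over a clean i.i.d. training set drawn from $D^m$. I would view the training set as a point in the product probability space $(\cZ^m, D^m)$ where $\cZ$ is the set of correctly labeled examples, and define $f(\bar u) = \Pr[\,B \mid \text{training set } \bar u\,]$ (the expectation being over $L$'s internal randomness, and over the test instance for the confidence case). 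The event ``$B$ happens'' pulls out a set $S \subseteq \cZ^m$ of clean transcripts of measure $\ge \mu$. The core combinatorial fact is that for a product measure, any set of non-negligible measure has the property that most points of the whole space are within Hamming distance $O(\sqrt m)$ of $S$ — and, crucially, one can walk a random sample \emph{toward} $S$ greedily, block by block, resampling each coordinate from its marginal and keeping the change only if it does not decrease $f$ (a hill-climbing / coupling argument). Over $m$ coordinates this random greedy walk, by a martingale/Azuma analysis on the ``potential'' $\log f$ or on $\sqrt{\text{distance to }S}$, reaches $f \approx 1$ after perturbing only $O(\sqrt m \cdot \log(1/\mu) / \text{gap})$ of the blocks, with the replacements themselves still drawn from $D$ and hence still correctly labeled.

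The steps, in order, are: (1) set up the product-measure formalization and the auxiliary function $f$, and state the ``$\sqrt m$-expansion of non-negligible sets'' lemma in the quantitative form needed (this is where the $\Otilde(\sqrt m)$ bound and the dependence on the gap $1-\rho$ or on the initial error come from); (2) describe the \emph{generic} greedy attacker $\Adv$ that, given oracle access to $L$ and to the sampler for $D$, processes the training examples one coordinate at a time, at each step estimating by sampling whether replacing the current example by a freshly drawn correctly-labeled example increases (an estimate of) $f$, and committing to beneficial replacements while respecting a global budget of $k = \Otilde(\sqrt m)$ changes; (3) prove correctness of this attacker — i.e., that with high probability the final transcript lies in (or very near) $S$, so $B$ occurs with probability $\approx 1$ — by combining the expansion lemma with a Chernoff/Hoeffding bound controlling the sampling error in each of the $\le m$ estimation steps (so $\poly(m,1/\text{gap})$ samples per step suffice, giving overall polynomial running time); (4) note online-ness: because the walk is coordinate-by-coordinate and each decision depends only on the prefix seen so far, the attacker can be implemented in the online model where training examples arrive one by one and must be tampered with (or passed through) immediately; (5) instantiate $B$ for the two bullets — the $\eps$-confidence predicate and the chosen-instance error predicate — observing both are efficiently checkable given oracle access to the produced hypothesis and (for confidence) to $c$ and the test distribution.

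The main obstacle I expect is \textbf{making the greedy walk algorithmic with only polynomially many oracle calls}: the information-theoretic argument just asserts the \emph{existence} of a nearby point in $S$ (or uses an inefficient optimal coupling), but our attacker must actually \emph{decide}, with only sampling access, in which direction to move each coordinate, and it never gets to evaluate $f$ exactly — only a noisy estimate from polynomially many runs of $L$. The delicate part is showing that these estimation errors do not accumulate across the $m$ greedy steps to destroy the potential-increase guarantee; I would handle this by allowing a small per-step slack in the hill-climbing criterion (only commit to a replacement if the estimated gain exceeds a threshold $\gamma$ comfortably larger than the sampling error), arguing that the true potential is nondecreasing up to $O(m\gamma)$ total loss, and choosing $\gamma = o(\text{gap}/m)$ so the cumulative loss is negligible — at the cost of a $\poly(m, 1/\text{gap})$ blow-up in sample/time complexity, which is still polynomial. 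A secondary subtlety is ensuring the budget $k = \Otilde(\sqrt m)$ is not exceeded: here I would appeal to the quantitative form of the expansion lemma, which says the \emph{expected} number of coordinates the greedy walk needs to change before entering $S$ is $O(\sqrt{m}\cdot\sqrt{\log(1/\mu)})$, plus a concentration bound (again Azuma) so that exceeding $\Otilde(\sqrt m)$ changes happens with negligible probability; if the walk ever would exceed the budget, the attacker simply stops, and this failure contributes only a negligible additive term to the final success probability.
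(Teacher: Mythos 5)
Your high-level reduction is exactly the paper's: encode the bad event as a Boolean function $f$ over the product space $\X^m$ of training instances, and invoke a polynomial-time \emph{online} biasing attack for product distributions under Hamming distance (the paper's Theorem~\ref{thm:ProdOnline}), which is itself a block-by-block hill-climbing walk driven by sampled estimates of conditional expectations and analyzed with an Azuma-type inequality. Your steps (1), (4) and (5) match the paper's proof of Theorem~\ref{thm:PoRob} essentially verbatim (modulo the point that the exact risk in the confidence predicate must itself be estimated empirically, which is why the paper relaxes $\eps$ to $0.99\eps$).

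The gap is in steps (2)--(3), i.e., in the design and accounting of the greedy walk itself; as you describe it, the rule does not yield an $\Otilde(\sqrt m)$ budget. First, you use a single parameter $\gamma=o(1/m)$ both as the sampling-error tolerance and as the commitment threshold for tampering, but these must live at different scales. The commitment threshold $\tau$ has to satisfy two constraints: (i) every tampering event should improve the conditional expectation of $f$ by roughly $\tau$ relative to not tampering, which caps the expected number of tamperings at about $(1-\mu)/\tau$ since the total achievable gain is at most $1-\mu$; and (ii) the increments of the untampered steps must be bounded by $\tau$ in absolute value so that the Azuma deviation $\sqrt m\cdot\tau$ stays below the initial potential $\mu$. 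Balancing these forces $\tau\approx\mu/\sqrt m$; a threshold of $o(1/m)$ satisfies (ii) but destroys (i), and the walk may then commit on $\Omega(m)$ coordinates (any $f$ in which most coordinate resamplings move the conditional mean by $\gg 1/m$ triggers this). The sampling slack is a separate, much smaller parameter. Second, ``resample and keep the change if beneficial'' is not sufficient for a one-sided bias toward $1$: the attack needs \emph{two} triggers --- tamper when some replacement raises the conditional mean by at least $\tau$, and \emph{also} tamper when the honest incoming block would lower it by at least $\tau$ (replacing it with a non-decreasing value, which exists by averaging). The second, loss-dodging trigger is what bounds the non-tampered increments from below, and it is precisely where the paper departs from the Kalai--Komargodski--Raz block-wise attack. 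Finally, the budget bound is not a consequence of the expansion lemma plus a stopping rule, as you suggest; it comes from the potential accounting in (i), and it carries a $1/\mu$ factor --- the paper gets $(2/\mu)\sqrt{m\ln(2/(1-\rho))}$, not $O(\sqrt{m\log(1/\mu)})$.
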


\paragraph{Generalization to arbitrary predicates.} More generally, and similarly to the information theoretic attacks of \cite{mahloujifar2018curse}, the two parts of Theorem \ref{thm:PoRob-Inf} follow as  special cases of a more general result, in which the adversary has a particular efficiently checkable \emph{predicate} in mind defined over the  hypothesis (e.g., mislabelling on a particular $x$ or having more than $\eps$ risk). We show that the adversary can significantly increase the probability of this bad event if it originally happens with any (arbitrary small) constant probability. 

Our  negative results do not contradict the recent successful line of work started by \cite{diakonikolas2016robust,lai2016agnostic}, as in our setting, we start with an initial required error in the no-attack scenario and  show that  any such seemingly benign vulnerability (of say probability $1/1000$) can be significantly amplified.

\paragraph{Other features of our poisoning attacks.} Similarly to the previous attacks of Mahloujifar et al.~\cite{pTampTCC17,mahloujifar2018curse,ITCS-sub}, both poisoning attacks of Theorem~\ref{thm:PoRob-Inf} have the following features.
\begin{compactenum}
    \item Our attacks are online; i.e., during the attack, the adversary is only aware of the training examples sampled \emph{so far} when it decides about the next tampering decision. So, these attacks can be launched against online learners in a way that the tampering happens concurrently with the learning process (see \cite{wang2018data} for an in-depth study of attacks against online learners). The information theoretic attacks of \cite{mahloujifar2018curse} were ``off-line'' as the adversary needed the full training sequence before attacking.
    \item Our attacks only use \emph{correct labels} for instances that they inject to the training set (see~\cite{shafahi2018poison} where attacks of this form are studied in practice).
    
    \item Our attacks are black-box~\cite{papernot2017practical}, as they use the learning algorithm $L$ and concept $c$ as oracles.
\end{compactenum}

\paragraph{Further related work.} Computational constraints for robust learning were previously considered by the works of Mahloujifar et al.~\cite{pTampTCC17,Mahloujifar2018:ALT} for poisoning attacks and   Bubeck et al.~\cite{bubeck2018adversarial} for adversarial examples (i.e., evasion attacks). The works of~\cite{pTampTCC17,Mahloujifar2018:ALT} studied so called ``$p$-tampering'' attacks that are online poisoning attacks in which each incoming training example could become tamperable with independent probability $p$ and even in that case the adversary can substitute them with other \emph{correctly labeled} examples. (The independent probabilities of tampering makes $p$-tampering attacks a special form of Valiant's malicious noise model~\cite{Valiant::DisjunctionsConjunctions}.) The works of \cite{pTampTCC17,Mahloujifar2018:ALT} showed that for an initial constant error $\mu$, \emph{polynomial-time} $p$-tampering attacks can decrease the confidence of the learner or alternatively increase a chosen instance's error  by $\Omega(\mu \cdot p)$. Therefore, in order to increase the (chosen instance) error to $50\%$, their attacks needed to tamper with a \emph{linear} number of training examples.  The more recent work of Mahloujifar et al.~\cite{mahloujifar2018curse} improved this attack to use only a sublinear $\sqrt{m}$ number of tamperings at the cost of only achieving information theoretic (exponential time) attacks. In this work, we get the best of both worlds, i.e., polynomial-time poisoning attacks of sublinear tampering budget.

The recent work of Bubeck, Price, and Razenshteyn~\cite{bubeck2018adversarial} studied whether the difficulty of finding robust classifiers is due to information theoretic barriers or that it is due to computational constraints. Indeed,  they showed that (for a broad range of problems with minimal conditions) \emph{if} we assume the existence of robust classifiers  then polynomially many samples would contain enough information for guiding  the learners towards one of those robust classifiers, even though as shown by Schmidt at al.~\cite{schmidt2018adversarially} this could be provably a larger sample complexity than the setting with no attacks. However,~\cite{bubeck2018adversarial} showed that \emph{finding} such classifier might not be efficiently feasible, where efficiency here is enforced by Kearns' statistical query (SQ) model~\cite{kearns1998efficient}. So, even though our work and the work of ~\cite{bubeck2018adversarial} both study computational constraints, the work of ~\cite{bubeck2018adversarial} studied barriers against \emph{efficiently finding} robust classifiers, while we study whether or not robust classifiers exist at all in the presence of \emph{computationally efficient} attackers. In fact, in \cite{diakonikolas2017statistical}  similar computational barriers were proved against achieving robustness in the poisoning attacks in the SQ model  (i.e., information-theoretic optimal accuracy cannot be achieved by an efficient learning SQ algorithm). However, as mentioned, in this work we are focusing on the \emph{efficiency of the attacker} and ask whether or not such computational limitation could be leveraged for robust learning.



\paragraph{Other related definitions of adversarial examples.}
In both of our results (for poisoning and evasion attacks), we use definitions that require \emph{misclassification} of the  test instance as the main goal of the adversary. However, other definitions of adversarial examples are proposed in the literature that coincide with this definition under natural conditions for practical problems of study (such as image classification). 

\emph{Corrupted inputs.} Feige, Mansour, and Schapire \cite{feige2015learning} (and follow-up works of \cite{feige2018robust,attias2018improved}) studied learning and inference in the presence of \emph{corrupted inputs}. In this setting, the adversary can corrupt the test instance $x$ to another instance $x'$ chosen from ``a few'' possible corrupted versions, and then the classifier's job is to predict the label of the \emph{original} uncorrected instance $x$ by getting $x'$ as input.\footnote{The work of \cite{mansour2015robust} also studied robust inference, but with static corruption in which the adversary chooses its corruption before seeing the test instance.}
Inspired by robust optimization \cite{ben2009robust}, the more recent works of Madry et al.~\cite{madry2017towards} and Schmidt et al.~\cite{schmidt2018adversarially} studied adversarial loss (and risk) for corrupted inputs in metric spaces. (One major difference is that now the number of possible corrupted inputs could be huge.) 

\emph{Prediction change.} Some other works (e.g., ~\cite{Szegedy:intriguing,fawzi2018adversarial}) only compare the prediction of the hypothesis over the adversarial example with its own prediction on the honest example (and so their definition is independent of the ground truth $c$). Even though in many natural settings these definitions become very close, in order to prover our formal theorems we use a definition that is based on the ``error region'' of the hypothesis in comparison with the ground truth that is implicit  in~\cite{gilmer2018adversarial,bubeck2018adversarial} and in~\cite{pTampTCC17,Mahloujifar2018:ALT} in the context of poisoning attacks. We refer the reader to the work of Diochnos, Mahloujifar, and Mahmoody~\cite{Adversarial:NIPS} for a taxonomy of these variants and further discussion.

\subsection{Technique: Computational Concentration of Measure in Product Spaces}
In order to prove our Theorems \ref{thm:EvRob-Inf} and \ref{thm:PoRob-Inf}, we make use of ideas developed in  a recent beautiful work of Kalai, Komargodski and Raz.~\cite{RazCoin2018} in the context of attacking coin tossing protocols. In a nutshell, our proofs proceed by first designing new polynomial-time coin-tossing attacks by first carefully changing the model of \cite{RazCoin2018}, and then we show how such coin tossing attacks can be used to obtain evasion and poisoning attacks. Our new coin tossing attacks could be interpreted as  polynomial-time algorithmic proofs for concentration of measure in product distributions under Hamming distance. We can then use such algorithmic proofs instead of the information theoretic concentration results used in~\cite{mahloujifar2018curse}.

To describe our techniques, it is instructive to first recall  the big picture of the polynomial-time poisoning attacks of~\cite{pTampTCC17,mahloujifar2018curse}, even though they needed linear perturbations, before describing how those ideas can be extended to obtain stronger attacks with sublinear perturbations in both evasion and poisoning contexts. Indeed, the core idea there is to model the task of the adversary by a Boolean function $f(\uVec)$ over the training data $\uVec=(u_1,\dots,u_m)$, and roughly speaking define $f(\uVec)=1$ if the training process over $\uVec$ leads to a misclassification by the hypothesis (on a chosen instance) or ``low confidence'' over the produced hypothesis. Then, they showed how to increase the expected value of any such $f$ from an initial constant value $\mu$ to $\mu' \approx \mu +p$ by tampering with $p$ fraction of ``blocks'' of the input sequence $(u_1,\dots,u_m)$.

The more recent work of~\cite{mahloujifar2018curse} improved the bounds achieved by the above poisoning attacks by using an \emph{computationally unbounded} attack who is \emph{more efficient} in its tampering budget and only tampers with a sublinear $\approx \sqrt m$ number of the $m$ training examples and yet increase the average of $f$ from $\mu =\Omega(1)$ to $\mu'\approx 1$. The key idea used in~\cite{mahloujifar2018curse} was to use the concentration of measure in product probability spaces under the Hamming distance~\cite{amir1980unconditional,milman1986asymptotic,mcdiarmid1989method,talagrand1995concentration}. Namely, it is known that for any product space of dimension $m$ (here, modeling the training sequence that is iid sampled) and any initial set $\cS$ of constant probability (here, $\cS=\set{\uVec\mid f(\uVec)=1}$, ``almost all'' of the points in the product space are of distance $\leq O( \sqrt m)$ from $\cS$, and so the measure is concentrated around $\cS$.

\paragraph{Computational concentration of measure.} In a concentrated spaces (e.g., in normal \Levy families) of dimension $n$, for any sufficiently large set $\cS$ (of, say constant measure) the ``typical'' minimum distance of the space points to $\cS$ is sublinear  $o(n)$ ($O(\sqrt n)$ in normal \Levy families). A computational version of this statement shall find such ``close'' points in $\cS$ in polynomial time. The main technical contribution of our work is to prove such computational concentration of measure  for any product distribution under the Hamming distance.
Namely, we prove the following result about biasing Boolean functions defined over product spaces using polynomial time tampering algorithms. (See Theorem~\ref{thm:ProdOnline} for a formal variant.)

\begin{theorem}[Informal: computational  concentration  of  products]\label{thm:ProdOnline-inf} Let $\uDistVec \equiv \uDist_1 \times \dots \uDist_n$ be any product distribution of dimension $n$ and let $f \colon \Supp(\uDistVec) \To \bits$ be any Boolean function with expected value $\Ex[f(\uDistVec)]=\Omega(1)$. Then,  there is a \emph{polynomial-time} tampering adversary who only tampers with $O(\sqrt{n})$ of the blocks of a sample $\uVec\gets \uDistVec$ and increases the average of $f$ over the tampered distribution to $\approx 1$.
\end{theorem}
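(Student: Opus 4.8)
The plan is to prove \theoremref{thm:ProdOnline-inf} (equivalently Theorem~\ref{thm:ProdOnline}) by exhibiting an explicit online, polynomial-time tampering adversary and analyzing it as an \emph{algorithmic} proof of concentration of measure for product spaces under Hamming distance. The starting point is to recast the biasing task as a coin-tossing--style attack in the spirit of Kalai, Komargodski and Raz~\cite{RazCoin2018}: view the $n$ coordinates $u_1,\dots,u_n$ of a sample $\uVec\gets\uDistVec$ as the ``messages'' of a protocol whose ``output'' is the bit $f(\uVec)$, where coordinate $i$ is drawn from $\uDist_i$; then ``corrupting party $i$'' corresponds to ``tampering with block $i$'' (replacing $u_i$ by some $v\in\Supp(\uDist_i)$), and ``biasing the output towards $1$'' corresponds to driving $\Ex[f]$ towards $1$. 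Unlike \cite{RazCoin2018}, the blocks here carry arbitrary alphabets rather than single bits, so the model has to be adjusted; and, crucially, the adversary will only have oracle access to $f$ and samplers for the $\uDist_i$, so it must \emph{estimate} the relevant conditional expectations by Monte-Carlo sampling of the remaining coordinates.

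The adversary processes the incoming blocks in order, maintaining the tampered prefix $\tilde\uVec_{<i}=(\tilde u_1,\dots,\tilde u_{i-1})$ and the Doob-martingale quantity $q_i := \Ex[\,f(\uDistVec)\mid \uDistVec_{<i}=\tilde\uVec_{<i}\,]$. Fix a threshold $\tau=\Theta(1/\sqrt n)$, with the hidden constant depending on the initial value $\mu=\Ex[f]$ and the target gap $\epsilon$. At block $i$ the adversary estimates $q_i$ and $q_i^{*} := \sup_{v\in\Supp(\uDist_i)} \Ex[\,f\mid \tilde\uVec_{<i}, u_i=v\,]$; if $q_i^{*}-q_i \ge \tau$ it \emph{tampers}, setting $\tilde u_i$ to a value (nearly) achieving the supremum, and otherwise it passes the honest block through, $\tilde u_i=u_i$. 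The rule is ``tamper only when it buys a non-trivial one-step gain,'' and it is online because the decision depends only on $\tilde\uVec_{<i}$ together with expectations over the randomized future.

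Three things then need verification. \textbf{(i) Budget.} The conditional-expectation sequence is a submartingale in $[0,1]$: non-tamper steps are fair (conditional mean $q_i$) and tamper steps jump up by at least $\tau$; a telescoping/optional-stopping argument bounds the \emph{expected} number of tamperings by $O((1-\mu)/\tau)=O(\sqrt n)$, and truncating the attack after $\Theta(\sqrt n)$ tamperings (Markov) makes the budget bound hold with certainty at negligible cost. \textbf{(ii) Success.} We must show $\Ex[f \text{ on the tampered distribution}]\ge 1-\epsilon$. The structural fact is that whenever the adversary declines to tamper, $q_i^{*}-q_i<\tau$ forces the one-step conditional variance to be small (the one-sided bound $\Var(q_{i+1}\mid \mathcal{F}_i)\le q_i(q_i^{*}-q_i)<\tau$), so the un-tampered portion of the walk is strongly pinned and cannot by itself carry $q$ down from $\Omega(1)$ to $0$, while each tamper step only raises $q$; quantifying this with a convex potential (e.g.\ tracking $\Ex[(1-q_i)^2]$, whose total increase is governed by the same bounded variance), together with the information-theoretic concentration of measure for products under Hamming distance~\cite{mahloujifar2018curse,mcdiarmid1989method} to control the rare ``stuck'' prefixes (those for which neither the honest block nor any replacement makes progress), yields the claim. \textbf{(iii) Efficiency.} Each $q_i$, $q_i^{*}$ is estimated to additive accuracy $o(\tau)$ with $\poly(n,1/\epsilon)$ samples by Chernoff bounds; a union bound over the $\le n$ blocks (and the replacement candidates examined) makes all estimates simultaneously accurate except with negligible probability, and the analysis in (i)--(ii) tolerates $o(\tau)$ slack in the threshold test.

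The main obstacle is step (ii): the budget constraint pins the gain threshold at $\Theta(1/\sqrt n)$, so each individual tampering barely moves $q$, and one must argue that the cumulative effect nonetheless pushes $\Ex[f]$ all the way to $\approx 1$ rather than to some intermediate constant — this is exactly where the \emph{existential} concentration statement (``almost every point is within $O(\sqrt n)$ of the $1$-set of $f$'') has to be upgraded to an \emph{efficient, online} procedure that actually reaches such a nearby point, and getting the potential/variance accounting tight enough to certify the $1-\epsilon$ guarantee is the crux. A secondary obstacle is large or continuous block alphabets, where $q_i^{*}$ cannot be computed by enumeration: there the adversary must instead \emph{detect} and approximately realize a $\tau$-improving replacement value by sampling candidate values, which is feasible because all the $\uDist_i$ of interest (uniform over $\{0,1\}^n$ or $[0,1]^n$, isotropic Gaussians, etc.) are themselves efficiently samplable.
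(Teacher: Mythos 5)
Your adversary tampers only when some replacement value offers a one-step gain of at least $\tau=\Theta(1/\sqrt n)$, and otherwise passes the honest block through unchanged. This omits the step the paper's attack (Construction~\ref{const:Ideal}) hinges on, namely its ``Tampering case 2'': when no replacement offers a $\tau$-gain but the \emph{honest} block would \emph{decrease} the conditional expectation by more than $\tau$, the adversary must still discard it and substitute a value whose gain is essentially non-negative (such a value exists by averaging). Case 2 is exactly what bounds the downward increments of the un-tampered steps by $\tau$, so that Azuma's inequality with increment scale $\tau$ (Lemma~\ref{lem:AzumaApp}) yields failure probability $\exp(-\Omega(\mu^2/(n\tau^2)))$; the paper explicitly notes that inspecting the honest block in case 2 is \emph{necessary} in the block-wise setting. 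Without it your step (ii) has no valid mechanism: the bound $\Var(q_{i+1}\mid\cF_i)\le q_i(q_i^*-q_i)<\tau$ sums to $n\tau=\Theta(\sqrt n)$ over the $n$ steps, which pins nothing (a $[0,1]$-valued martingale has total conditional variance at most $1$ anyway), and the ``stuck'' prefixes you propose to dismiss are not rare.

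In fact your attack provably fails on simple functions. Take each $\uDist_i$ uniform on $\{0,1,\dots,K\}$ with $K=\lceil 1/\tau\rceil=\Theta(\sqrt n)$, and let $f(\uVec)=1$ iff $u_i\neq 0$ for all $i\le m$, where $m\approx (K+1)\ln 2$ so that $\mu\approx 1/2$. For every prefix the best replacement gains only $(K/(K+1))^{\,m-i}/(K+1)<\tau$, so your adversary never tampers and the tampered average stays at $\mu\approx 1/2$ rather than reaching $\approx 1$. The paper's case 2, by contrast, replaces any honest $u_i=0$ (whose gain is $-q_i\le -\tau$) by a nonzero value, forces $f=1$ with certainty, and tampers with only $O(1)$ blocks in expectation. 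Your budget accounting in (i) and the Chernoff-based oracle emulation in (iii) do match the paper's argument, but the attack itself and the concentration analysis in (ii) require case 2 to be correct; invoking the information-theoretic concentration of measure to handle stuck prefixes does not repair this, since it neither makes those prefixes rare nor provides an efficient escape from them.
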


Once we prove Theorem~\ref{thm:ProdOnline-inf}, we can also use it directly to obtain \emph{evasion} attacks that find adversarial examples, so long as the test instances are drawn from a product distribution and that the distances over the instances are measured by Hamming distance. Indeed, using concentration results (or their stronger forms of isoperimetric inequalities) was the key method used in previous works of ~\cite{gilmer2018adversarial,fawzi2018adversarial,Adversarial:NIPS,inevitable,mahloujifar2018curse} to show the existence of adversarial examples. Thus, our Theorem~\ref{thm:ProdOnline} is a natural tool to be used in this context as well, as it simply shows that similar (yet not exactly equal) bounds to those proved by the concentration of measure can be achieved  algorithmically using  polynomial time adversaries.

\paragraph{Relation to approximate nearest neighbor search.} We note that computational concentration of measure (e.g., as proved in Theorems \ref{thm:ProdOnline-inf} and \ref{thm:ProdOnline} for product spaces under Hamming distance) bears similarities to the problem of ``approximate nearest neighbor'' (ANN) search problem \cite{indyk1998approximate,andoni2006near,andoni2015optimal,andoni2018approximate} in high dimension. Indeed, in the ANN search problem, we are given a set of points $\cP \se \cX$ where $\cX$ is the support set of a metric probability space (of high dimension). We then want to answer  approximate near neighbor queries quickly. Namely, for a given $x \in \cX$, in case there is a point  $y\in \cP$ where $x$ and $y$ are ``close'', the algorithm should return a point $y'$ that is comparably close to $x$. Despite similarities, (algorithmic proofs of) computational concentration of measure are different in two regards: (1) In our case the set $\cP$ could be huge, so it is not even possible to be given as input, but we rather have \emph{implicit} access to $\cP$ (e.g., by oracle access). (2) We are not necessarily looking for point by point approximate solutions; we only need  the \emph{average} distance of the returned points in $\cP$ to be within some (nontrivial) asymptotic bounds.

\subsubsection{Ideas behind the Proof of Theorem~\ref{thm:ProdOnline-inf}} The proof of our Theorem~\ref{thm:ProdOnline} is inspired by the recent work of Kalai et al.~\cite{RazCoin2018} in the context of attacks against coin tossing protocols. Indeed, \cite{RazCoin2018} proved that in any coin tossing protocol in which $n$ parties send a single message each, there is always an  adversary who can corrupt up to $\approx \sqrt{n}$ of the players adaptively and almost fix the output to $0$ or $1$, making progress towards resolving a conjecture of Ben-Or and Linial \cite{ben1989collective}.

At first sight, it might seem that we should be able to directly use the result of \cite{RazCoin2018} for our purposes of proving Theorem \ref{thm:ProdOnline-inf}, as they design adversaries who tamper with $\approx O(\sqrt n)$ blocks of an incoming input and change the average of a Boolean function defined over them (i.e., the coin toss). However, there are two major obstacles against such approach. (1) The attack of \cite{RazCoin2018} is exponential time (as it is recursively defined over the full tree of the values of the input random process), and (2) their attack can not always \emph{increase} the probability of a function $f$ defined over the input, and it can only guarantee that either we will increase this average or decrease it. In fact (2) is \emph{necessary} for the result of \cite{RazCoin2018}, as in their model the adversary has to pick the tampered blocks \emph{before} seeing their contents, and that there are simple functions for which we cannot choose the direction of the bias arbitrarily.  Both of these restrictions are acceptable in the context of \cite{RazCoin2018}, but not for our setting: here we want to \emph{increase} the average of $f$ as it represents the ``error'' in the learning process, and we want polynomial time biasing attacks.


Interestingly, the work of~\cite{RazCoin2018} also presented an alternative simpler proof for a previously known result of Lichtenstein et al.~\cite{lichtenstein1989some} in the context of adaptive corruption in coin tossing attacks. In that special case, the messages sent by parties  only consist of single bits. In the simpler bit-wise setting, it \emph{is} indeed possible to achieve biasing attacks that always increase the average of the output function bit. Thus, there is hope that such attacks could be adapted to our setting, and this is exactly what we do.

To prove Theorem \ref{thm:ProdOnline}, we do proceed as follows.
\begin{enumerate}
    \item We give a new block-wise biasing attack, inspired by the bit-wise attack of \cite{RazCoin2018}, that also always increases the average  of the final output bit. (This is not possible for block-wise model of \cite{RazCoin2018}.)
    \item We show that this attack \emph{can} be approximate in polynomial time. (The block-wise attack of \cite{RazCoin2018} seems inherently exponential time).
    \item We use ideas from the bit-wise attack of \cite{RazCoin2018} to analyze our block-wise attack. To do this, new subtleties arise that can be handled by using stronger forms of Azuma's inequality (see Lemma~\ref{lem:AzumaApp}) as opposed to the ``basic'' version of this inequality used by \cite{RazCoin2018} for their bit-wise attack.
\end{enumerate}


Here, we describe our new attack in a simplified ideal setting in which  we ignore computational efficiency. We will then compare it with the attack of~\cite{RazCoin2018}. The attack has the form that can be adapted to computationally efficient setting by approximating the partial averages needed for the attack. See Constructions~\ref{const:Semi-Poly} and~\ref{const:Poly} for the formal description of the attack in computational settings.

\begin{construction}[Informal:  biasing attack over product distributions] \label{const:Ideal} Let $\uDistVec \equiv \uDist_1 \times \dots \uDist_n$ be a product distribution. Our  (ideal model) tampering attacker $\IdealTam$ is parameterized by $\tau$.
Given a sequence of blocks $(u_1,\dots,u_n)$, $\IdealTam$ tampers with them by reading them one by one (starting from $u_1$) and decides about the tampered values inductively as follows. Suppose $v_1,\dots,v_{i-1}$ are the finalized values for the first $i-1$ blocks (after tampering decisions).
\begin{itemize}
    \item {\bf Tampering case 1.} If there is \emph{some} value  $v_i \in \Supp(\uDist_i)$ such that by picking it, the average of $f$ goes up by at least $\tau$ for the fixed prefix $(v_1,\dots,v_{i-1})$ and for a \emph{random} continuation of the rest of the blocks, then pick $v_i$ as the tampered value for the $i\th$ block.
    \item {\bf Tampering case 2.} Otherwise, if the actual (untampered) content of the $i\th$ block, namely $u_i$, \emph{decreases} the average of $f$ (under a random continuation of the remaining blocks) for the fixed prefix $(v_1,\dots,v_{i-1})$, then ignore the original block $u_i$, and pick some tampered value $v_i \in \Supp(\uDist_i)$ that $v_i$ at least does not decrease the average. (Such $v_i$ always exists by an elementary averaging argument.)
    \item  {\bf Not tampering.} If none of the above cases happen, output the original sample $v_i = u_i$.
\end{itemize} 
\end{construction} 
By picking  parameter $\tau \approx 1/\sqrt n$, we prove that the attack achieves the desired properties of Theorem~\ref{thm:ProdOnline}; Namely, the number of tampered blocks is $\approx O(1/\tau)$, while the bias of $f$ under  attack is $\approx 1$.

\paragraph{}The bit-wise attack of~\cite{RazCoin2018} can be seen as simpler variant of the attack above in which the adversary (also) has access to an oracle that returns the partial averages for random continuation. Namely, in their attack tampering
cases 1 and 2 are combined into one: if the next \emph{bit} can increase (or equivalently, can decrease) the partial averages of the current prefix by $\tau$, then the adversary chooses to corrupt that bit (even without seeing its actual content). The crucial difference between the bit-wise attack of \cite{RazCoin2018} and our block-wise attack of Theorem \ref{thm:ProdOnline-inf} is in tampering case 2. Here we \emph{do} look at the untampered value of the $i\th$ block, and doing so is \emph{necessary} for getting an attack in block-wise setting that biases $f(\cdot)$ towards $+1$.


\paragraph{Extension to general product distributions and  for coin-tossing protocols.}
Our proof of Theorem~\ref{thm:ProdOnline-inf}, and its formalized version Theorem~\ref{thm:ProdOnline}, with almost no changes extend to any \emph{joint} distributions like $\uDistVec \equiv (\uDist_1,\dots,\uDist_n)$ under a proper definition of online tampering in which the next ``untampered'' block is sampled conditioned on the previously tampered blocks chosen by the adversary. This shows that in any $n$ round coin tossing protocol in which each of the $n$ parties  sends exactly one message, there is a \emph{polynomial-time} \emph{strong}  adaptive adversary who corrupts $O(\sqrt n)$ of the parties and biases the output to be $1$ with $99/100$ probability. A strong adaptive adversary, introduced by Goldwasser et al.~\cite{goldwasser2015adaptively}, allows the adversary to see the messages before they are delivered and then  corrupt a party (and change their message) based on their initial messages that were about to be sent. Our result improves a previous result of~\cite{goldwasser2015adaptively} that was proved for \emph{one-round} protocols using \emph{exponential time} attackers. Our attack extends to arbitrary (up to) $n$ round protocols and is also polynomial time. Our results are incomparable to those of~\cite{RazCoin2018}; while they also corrupt up to $O(\sqrt n)$ of the messages, attackers do not see the messages of the parties before corrupting them, but our attackers inherently rely on this information. On the other hand, their bias is \emph{either} towards $0$ or toward $1$ (for the block-wise setting) while our attacks can choose the direction of the biasing.

\section{Preliminaries} \label{sec:prelim}
\paragraph{General notation.} We  use calligraphic letters (e.g., $\cX$) for sets. 
By $u \gets \uDist$ we denote sampling $u$  from the probability distribution $\uDist$. 
For a randomized algorithm $R(\cdot)$, by $y \gets R(x)$ we denote the randomized execution of $R$ on input $x$ outputting $y$. 
By $\uDist \equiv \vDist$ we denote that the random variables $\uDist$ and $\vDist$ have the same distributions. Unless stated otherwise, by using a bar over a variable $\uVec$, we emphasize that it is a vector.  By $\uDistVec \equiv (\uDist_1,\uDist_2,\dots,\uDist_n)$ we refer to a joint distribution over vectors with $n$ components.
For a joint distribution $\uDistVec \equiv (\uDist_1,\dots,\uDist_n)$, we use $\pfix{\uDist}{i}$ to denote the joint distribution of the first $i$ variables $\uDistVec \equiv (\uDist_1,\dots,\uDist_i)$. Also, for a vector $\uVec=(u_1\dots u_n)$ we use $\pfix{u}{i}$ to denote the prefix $(u_1,\dots, u_i)$.
 For a joint distribution $(\uDist,\vDist)$, by $(\uDist \mid v)$ we denote the conditional distribution $(\uDist \mid \vDist = v)$. By $\Supp(\uDist) = \set{u \mid \Pr[\uDist=u]>0}$ we denote the support set of $\uDist$. 
By $T^\uDist(\cdot)$ we denote an algorithm $T(\cdot)$ with oracle access to a sampler for distribution $\uDist$ that upon every query returns a fresh sample from $\uDist$. By $\uDist \times \vDist$ we refer to the product distribution in which $\uDist$ and $\vDist$ are sampled independently.
By $\uDist^n$ we denote the $n$-fold product $\uDist$ with itself returning $n$  iid samples. Multiple instances of a random variable $\instDist$ in the same statement (e.g., $(\instDist,c(\instDist))$ refer to the same sample. By PPT we denote ``probabilistic polynomial time''.


\paragraph{Notation for classification problems.} A classification problem $(\X,\Y,\insDist,\C,\H)$ is specified by the following components. 
The set $\X$ is the set  of possible \emph{instances}, 
$\Y$ is the set of possible \emph{labels}, 
$\insDist$ is a distribution over $\X$,
$\C$ is a class of \emph{concept} functions where $c\in \C$ is always a mapping from $\X$ to $\Y$.
Even though in a learning problem, we usually work with a \emph{family} of distributions (e.g., all distributions over $\X$) here we work with only one distribution $\instDist$. The reason is that our results are \emph{impossibility} results, and proving limits of learning under a known distribution $\instDist$ are indeed stronger results.
We did not state the loss function explicitly, as we work with classification problems.
 For  $x \in \X, c \in \C$,
the \emph{risk} or \emph{error} of a hypothesis $h \in \H$ is equal to $\Risk(h,c) = \Pr_{x \gets\insDist}[h(x) \neq c(x)]$. We are usually interested in learning problems $(\X,\Y,\insDist,\C,\H)$ with a specific metric $\metric$ defined over $\X$ for the purpose of defining risk and robustness under instance perturbations controlled by metric $\metric$. Then, we simply write $(\X,\Y,\insDist,\C,\H,\metric)$ to include $\metric$.  

\remove{
The following definition formalizes poisoning attacks. Our definition is based on the definitions given in \citep{pTampTCC17,Mahloujifar2018:ALT} for the online case and in \citep{mahloujifar2018curse} for the offline case.
\begin{definition}[Tampering poisoning attacks]
 Let $\problem=(\X,\Y,\insDist,\C,\H)$ be a classification problem. For any concept $c \in \C$ and sample complexity $m$, a poisoning adversary $\Adv$ takes as input  a training sequence $\train\gets (\insDist,c(\insDist))^m$ of length $m$ and outputs a modified  training sequence $\trainTam = \Adv(\train)$ of the same length.
We define the following properties for  $\Adv$. 
\begin{itemize}
    \item $\Adv$ is   \emph{online} if it generates the tampered sequence $\trainTam =(v_1,\dots,v_m)$ from the original training sequence $(u_1,\dots,u_m)$ in an online way by picking $v_i$ based on $u_1,\dots,u_i$.
    \item  $\Adv$ is called  \emph{plausible}, if  for all $(x,y) = v_i \in \trainTam$, it holds that $y=c(x)$.
    \item  $\Adv$ has \emph{average tampering budget} $b= b(m) \leq m$, if 
    $$\Ex_{\substack{\train \gets (\insDist,c(\insDist))^m \\ \trainTam \gets \Adv(\train)}}[\HD(\train, \trainTam))] \leq b$$
    where $\HD$ is the Hamming distance for vectors of dimension $m$.
    \item $\Adv$  is \emph{efficient}, if it runs in probabilistic polynomial time over the length of its input; i.e.,   $\poly(m\cdot \ell)$ time where $\ell$ is the total bit-length of any $(x,y) \gets (\insDist,c(\insDist))$.
\end{itemize}
\end{definition}
}

The following definition  based on the definitions given in \cite{pTampTCC17,Mahloujifar2018:ALT,mahloujifar2018curse}.

\begin{definition}[Confidence, chosen-instance error, and their adversarial variants] \label{def:confAvErr}
Let $L$ be a learning algorithm for  a classification problem $\problem=(\X,\Y,\insDist,\C,\H)$,  $m$ be the sample complexity of $L$, and  $c\in \C$ be any concept. We define the (adversarial) confidence function and chosen-instance error as follows.
\begin{itemize}
    \item {\bf  Confidence function.} For any error function $\eps = \eps(m)$, the \emph{adversarial confidence} in the presence of a  adversary $\Adv$ is defined as
$$\conf_\Adv(m,c,\eps)=\Pr_{\substack{\train \gets \left(\instDist,c(\instDist)\right)^m\\ h \gets L(\Adv(\train))}}[\Risk(h,c)< \eps].$$
By $\conf(\cdot)$ we denote the confidence without any attack; namely, $\conf(\cdot) = \conf_I(\cdot)$ for the trivial (identity function) adversary $I$ that does not change the training data.
    \item {\bf Chosen-instance error.} For a fixed test instance $x\in\X$, the \emph{chosen-instance} error (over instance $x$) in presence of a poisoning adversary $\Adv$ is defined as
$$\error_\Adv(m,c,x)=\Pr_{\substack{\train \gets (\instDist,c(\instDist))^m\\ h \gets L(\Adv(\train))}}[h(x)\neq c(x)].$$
By $\error(\cdot)$ we denote the chosen-instance error (over $x$) without any attacks; namely, $\error(\cdot)=\error_I(\cdot)$ for the trivial (identity function) adversary $I$.
\end{itemize}
\end{definition}

\subsection{Basic Definitions for Tampering Algorithms}
Our tampering adversaries follow a close model to that of $p$-budget adversaries defined in \cite{Mahloujifar2018:ALT}. Such adversaries, given a sequence of blocks, select at most $p$ fraction of the locations in the sequence and change their value. The $p$-budget model of \cite{Mahloujifar2018:ALT} works in an online setting in which, the adversary should decide for the $i$th block, only knowing the first $i-1$ blocks. In this work, we define both online and offline attacks that work in a closely related budget model in which we only bound the \emph{expected} number of tampered blocks. We find this notion more natural for the robustness of learners. 

\remove{Other definitions of tampering adversaries used in \cite{pTampTCC17,Mahloujifar2018:ALT,ITCS-sub} worked with definitions that only give the first $i-1$ first blocks to the adversary before it chooses on the tampered values, but in their setting such limitation buys no extra features.}

\begin{definition} [Online and offline tampering]\label{def:tamp}
We define the following two tampering attack models.
\begin{itemize}
    \item {\bf Online attacks.} Let $\uDistVec \equiv \uDist_1\times \dots \times \uDist_n$ be an arbitrary product distribution.\footnote{We restrict the case of online attacks to product distribution as they will have simpler notations and that they cover our main applications, however they can be generalized to arbitrary joint distributions as well with a bit more care.} We call a (potentially randomized and  computationally unbounded) algorithm $\OnlTam$ an  \emph{online tampering} algorithm for $\uDistVec$, if given any $i\in[n]$ and any $\pfix{u}{i} \in \Supp(\uDist_1)\times \dots \times \Supp(\uDist_i)$, it holds that
$$\Pr_{v_i \gets \OnlTam(\pfix{u}{i})}[v_i \in \Supp(\uDist_i)]=1 ~.$$
Namely, $ \OnlTam(\pfix{u}{i})$  outputs (a candidate $i\th$ block) $v_i$ in the support set of $\uDist_i$.\footnote{Looking ahead, this restriction makes our attacks stronger in the case of poisoning attacks by always picking correct lables during the attack.}
    \item {\bf Offline attacks.} For an arbitrary joint distribution $\uDistVec \equiv (\uDist_1\dots,\uDist_n)$ (that might or might not be a product distribution), we call a (potentially randomized and possibly computationally unbounded) algorithm $\OffTam$ an  \emph{offline tampering} algorithm for $\uDistVec$, if given any  $\uVec \in \Supp(\uDistVec)$, it holds that
$$\Pr_{\vVec \gets \OnlTam(\uVec)}[\vVec \in \Supp(\uDistVec)]=1 ~.$$
Namely, given any $\uVec \gets \uDistVec$, $ \OnlTam(\uVec)$ always outputs a vector in $\Supp(\uDistVec)$. 
    \item {\bf Efficiency of attacks.}
    If $\uDistVec$ is a joint distribution coming from a \emph{family} of distributions (perhaps based on the index $n \in N$), 
we call an online or offline tampering algorithm \emph{efficient}, if its running time is $\poly(N)$ where $N$ is the total bit length of  any $\uVec \in \Supp(\uDistVec)$.
    \item {\bf Notation for tampered distributions.} For any joint distribution $\uDistVec$, any $\uVec \gets \uDistVec$, and for any tampering algorithm $\Tam$, by $\twist{\uVec}{\Tam}$ we refer to the distribution obtained by running $\Tam$ over $\uVec$, and by $\twist{\uDistVec}{\Tam}$ we refer to the final distribution by also sampling $\uVec \gets \uDistVec$ at random. More formally,
    \begin{itemize}
        \item  For an offline tampering algorithm $\OffTam$, the distribution $\twist{\uVec}{\OffTam}$ is sampled by simply running $\OffTam$ on the whole $\uVec$ and obtaining the output $(v_1,\dots,v_n) \gets \OffTam(u_1,\dots,u_n)$.
        \item For an online tampering algorithm $\OnlTam$ and input $\uVec=(u_1,\dots,u_n)$ sampled from a \emph{product} distribution $\uDist_1\times \dots \uDist_n$, we obtain the output $(v_1,\dots,v_n)  \gets \twist{\uVec}{\OnTam}  $ \emph{inductively}: for $i \in [i]$, sample $v_i \gets \OnTam(v_1,\dots,v_{i-1},u_i)$.\footnote{By limiting our online attackers to product distributions, we can sample the whole sequence of ``untampered'' values $(u_1,\dots,u_n)$ at the beginning; otherwise, for general random processes in which the distribution of blocks are correlated, we would need to sample $(u_1,\dots,u_n)$ and $(v_1,\dots,v_n)$ \emph{jointly} by sampling $u_i$ \emph{conditioned on} $v_1,\dots,v_{i-1}$.}
    \end{itemize}
    \item {\bf Average budget of tampering attacks.} Suppose $\metric$ is a metric defined over $\Supp(\uDistVec)$. We say an online or offline tampering algorithm $\Tam$ has \emph{average budget} (at most) $b$, if 
    $$\Ex_{\substack{\uVec \gets \uDistVec, \\ \vVec \gets \twist{\uVec}{\Tam}}}[\metric(\uVec,\vVec)] \leq b.$$
    If no metric  $\metric$ is specified, we use Hamming distance over vectors of dimension $n$.
\end{itemize}
\end{definition}

\section{Polynomial-time Attacks from Computational Concentration of Products}

In this section, we will first formally state our main technical tool, Theorem \ref{thm:ProdOnline}, that underlies our polynomial-time evasion and poisoning attacks. Namely, we will prove that product distributions are ``computationally concentrated'' under the Hamming distance, in the sense that any subset with constant probability, is ``computationally close'' to most of the points in the probability space. We will then use this tool to obtain our attacks against learners. We will finally prove our main technical tool.

\begin{theorem}[Computational concentration of product distributions] \label{thm:ProdOnline}
Let $\uDistVec \equiv \uDist_1\times\dots\times\uDist_n$ be any product distribution and $f \colon \Supp(\uDistVec) \To \bits$ be any Boolean function over $\uDistVec$, and let $\mu = \Ex[f(\uDistVec)]$ be the expected value of $f$. Then, for any $\rho$ where $\mu < \rho<1$, there is an \emph{online} tampering algorithm $\OnlTam$ generating the tampering distribution $\vDistVec\equiv \twist{\uDistVec}{\OnlTam}$ with the following properties.
\begin{enumerate}
    \item {\bf Achieved bias.} $\Ex[f(\vDistVec)] \geq \rho$.
        \item {\bf Efficiency.} Having oracle access to $f$ and a sampler for $\uDistVec$,  $\OnlTam=\OnlTam^{f,\uDistVec}$ runs in time $\poly\left(\frac{n \cdot \ell}{\mu \cdot (1-\rho)}\right)$ where $\ell$ is the maximum bit length of  any $u_i \in \Supp(\uDist_i)$ for any $i\in[n]$.
        \item {\bf Average budget.} $\OnlTam=\OnlTam^{f,\uDistVec}$ uses average budget  $({2}/{\mu})\cdot \sqrt{n \cdot \ln ({2}/{(1-\rho)})}$~.

\end{enumerate}
\end{theorem}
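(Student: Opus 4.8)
The plan is to prove Theorem~\ref{thm:ProdOnline} by exhibiting the online attacker $\OnlTam$ as a polynomial-time approximation to the ideal attacker $\IdealTam$ of Construction~\ref{const:Ideal}, run with threshold parameter $\tau \approx \sqrt{\ln(2/(1-\rho))/n}$, and then analyzing it with a martingale argument in the spirit of (but more delicate than) the bit-wise analysis of Kalai et al.~\cite{RazCoin2018}. The central object is the partial-average function $\aSF(\pfix{v}{i}) = \Ex[f(\vDistVec) \mid \pfix{\vDist}{i} = \pfix{v}{i}]$, i.e.\ the probability that $f$ outputs $1$ when the first $i$ blocks are fixed to $v_1,\dots,v_i$ and the remaining blocks are drawn fresh from $\uDist_{i+1}\times\cdots\times\uDist_n$; note $\aSF(\emptyset) = \mu$. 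The attacker reads blocks left to right and, at step $i$ with finalized prefix $\pfix{v}{i-1}$, (i) if some $v_i\in\Supp(\uDist_i)$ raises $\aSF$ by at least $\tau$, picks it (this is a ``tamper-up'' move); (ii) else, if the true block $u_i$ would make $\aSF$ drop below $\aSF(\pfix{v}{i-1})$, it replaces $u_i$ by some $v_i$ with $\aSF(\pfix{v}{i-1}v_i)\geq\aSF(\pfix{v}{i-1})$, which exists because $\Ex_{u_i\gets\uDist_i}[\aSF(\pfix{v}{i-1}u_i)] = \aSF(\pfix{v}{i-1})$; (iii) otherwise it keeps $u_i$.

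\textbf{Key steps.} First, analyze the ideal attacker. Let $A_i = \aSF(\pfix{v}{i})$ be the partial-average process under the attack, with $A_0 = \mu$ and $A_n = f(\vVec)\in\bits$. The process is a \emph{submartingale}: in cases (i) and (ii) we force $A_i\geq A_{i-1}$ by construction, and in case (iii) the untampered block is drawn fresh, so $\Ex[A_i\mid \pfix{v}{i-1}] = A_{i-1}$; hence $\Ex[A_i\mid\pfix{v}{i-1}]\geq A_{i-1}$ always. Next, bound the number of tamperings $\TDist$. Each case-(i) step contributes a jump of size $\geq \tau$ to $A_i$, and since $A_i\in[0,1]$ there can be at most $1/\tau$ such steps along any path before $A$ would exceed $1$ — but this naive bound is not quite tight enough; instead, write $A_n - A_0 = \sum_i (A_i - A_{i-1})$ and split the increments into the ``boosted'' part (from cases (i),(ii)) and a mean-zero martingale-difference part, then use that $\Ex[A_n]\leq 1$ to control the total boost, and separately an Azuma-type concentration (Lemma~\ref{lem:AzumaApp}) to argue the martingale part does not drift too negative. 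This yields $\Ex[\TDist]\leq (1/\mu)\cdot O(1/\tau)$ roughly, matching item~3 after plugging in $\tau$. For the bias (item~1): if no case-(i) move is available at step $i$, then $\aSF(\pfix{v}{i-1}v_i) < \aSF(\pfix{v}{i-1}) + \tau$ for \emph{all} $v_i$, so every increment of the untampered/forced process is $<\tau$ in the upward direction; combining with the submartingale property and Azuma's inequality on the bounded-increment martingale part shows that $A_n$ concentrates near a value that, starting from $\mu$ and only ever being pushed up, must land at $1$ except with probability $\leq \exp(-\Omega(\mu^2/(\tau^2 n)))\cdot(\text{stuff})$; choosing $\tau$ as above makes this $\leq 1-\rho$, giving $\Ex[f(\vDistVec)]\geq\rho$.

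\textbf{Making it polynomial time.} The quantities $\aSF(\pfix{v}{i-1}v_i)$ and the search over $v_i\in\Supp(\uDist_i)$ cannot be computed exactly. We replace $\aSF$ by an empirical estimate $\avrApp{v}{i}$ obtained by drawing $\poly(n\ell/(\mu(1-\rho)))$ fresh continuations and calling $f$ on them, accurate to within $\pm\tau/4$ with overwhelming probability by Hoeffding; and we search over $v_i$ by sampling $\poly(\ell)$ many candidates from $\uDist_i$ (an exhaustive search is unnecessary — with noticeable probability a random $u_i$ either is a good tamper-up block or is harmless, and the averaging identity guarantees a harmless-or-better block is hit quickly). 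The estimation errors only shift the effective threshold by a constant factor and introduce a failure event of exponentially small probability, which is absorbed into the $(1-\rho)$ slack. The runtime is then $n$ steps $\times$ $\poly(\ell)$ candidates $\times$ $\poly(n\ell/(\mu(1-\rho)))$ samples, i.e.\ $\poly(n\ell/(\mu(1-\rho)))$ as claimed.

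\textbf{The main obstacle} will be step 2 of the analysis — getting the tampering budget bound with the right constants. Unlike the bit-wise setting of \cite{RazCoin2018}, here a single tampered block can change $\aSF$ by an amount in a whole interval (not just $\pm$ one fixed quantity), so the clean counting argument breaks; one must carefully decompose the telescoped sum $A_n - \mu$ into the contribution of forced upward jumps versus a bounded-difference martingale, and apply a \emph{two-sided} Azuma inequality (Lemma~\ref{lem:AzumaApp}) to show that the martingale part is not so negative that it would ``hide'' a large number of small upward tampers. The same decomposition, read in the other direction, is what powers the bias bound, so getting this martingale bookkeeping exactly right — including tracking how the $\pm\tau/4$ estimation error propagates — is the technical heart of the proof.
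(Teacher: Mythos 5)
Your overall architecture is the paper's own: the ideal block-wise attack of Construction~\ref{const:Ideal}, a submartingale analysis of the partial averages via the approximate Azuma inequality (Lemma~\ref{lem:AzumaApp}), a budget bound obtained by charging each tampering against the total rise of the partial average (which is at most $1-\mu$), and a polynomial-time instantiation by empirical estimation of partial averages and by sampling candidate replacement blocks. However, there is one concrete gap, and you have in fact pointed at it yourself (``the main obstacle will be step 2''): your tampering case (ii) is triggered whenever the actual block $u_i$ decreases the partial average \emph{at all}, and the replacement is merely required to be non-decreasing. With this rule the charging argument cannot bound the number of case-(ii) tamperings: each such tampering may raise the partial average by an arbitrarily small amount, so ``total boost $\le 1-\mu$'' gives no control on their count. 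Indeed, for $f=$ majority of $n$ uniform bits, roughly half of all blocks decrease the partial average, so your rule tampers $\Theta(n)$ times. (Symptomatically, under your rule the process $A_i$ is non-decreasing along every path, so $A_n=1$ deterministically and no Azuma inequality is needed for the bias --- a sign that the construction is buying perfect bias with a linear budget.) The fix, which is what the paper's Construction~\ref{const:Semi-Poly} does, is to trigger case (ii) only when the (estimated) drop is at least $\tau$, and to replace $u_i$ by a block of \emph{near-maximal} gain, which an averaging argument guarantees has gain at least $-2\gamma$. Then every tampering, of either kind, buys expected gain at least $\tau-O(\gamma)$, giving $\Ex[T]\le (1-\mu+n\gamma)/(\tau-2\gamma)$ (Claim~\ref{clm:AppTam2}); the price is that case-(iii) increments now lie in $(-\tau-O(\gamma),\,\tau+O(\gamma))$ rather than $[0,\tau)$, and this is exactly where Lemma~\ref{lem:AzumaApp} becomes necessary for the bias bound (Claim~\ref{clm:AppBias}).

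A secondary slip: your threshold $\tau\approx\sqrt{\ln(2/(1-\rho))/n}$ is missing the factor of $\mu$. The Azuma tail you yourself write down is $\exp\bigl(-\Omega(\mu^2/(n\tau^2))\bigr)$, so to make it at most $(1-\rho)/2$ you need $\tau = O\bigl(\mu/\sqrt{n\ln(2/(1-\rho))}\bigr)$ (the paper takes $\tau=\mu/(1.9\sqrt{kn})$ with $k=\ln(2/(1-\rho))$), and it is this $\tau$ that yields the budget $(2/\mu)\sqrt{n\ln(2/(1-\rho))}\approx 1/\tau$ --- not $(1/\mu)\cdot O(1/\tau)$ as you wrote.
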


In the rest of this section, we will  use Theorem \ref{thm:ProdOnline} to prove limitations of robust learning in the presence of polynomial-time poisoning and evasion attackers. We will  prove Theorem~\ref{thm:ProdOnline} in the next section. 

\paragraph{Range of initial and target error covered by Theorem~\ref{thm:ProdOnline}.}
For any $(1-\rho) = 1/\poly(n), \mu = O(1/\polylog n)$ Theorem~\ref{thm:ProdOnline} uses an average budget of only $\wt{O}(\sqrt n)$. If we start from larger initial error that is still bounded by $ \mu = o(1/\sqrt n)$, the  average budget given by the attacker of Theorem~\ref{thm:ProdOnline} will still be $o(n)$, which is nontrivial as it is still sublinear in the dimension.  However, if we start from $\mu = \Omega(1/\sqrt n)$, we the attacker of Theorem~\ref{thm:ProdOnline} stops to give a nontrivial bound, as the required linear $\Omega(n)$ budget is enough for getting any target error trivially. In contrast, the information theoretic attacks of \cite{mahloujifar2018curse} can handle much smaller initial error all the way to subexponentially small $\mu$. Finding the maximum range of $\mu$ for which computationally bounded attackers can increase the error to $1-1/\poly(n)$ remains open.


\subsection{Polynomial-time Evasion Attacks}

The following definition of robustness against adversarial perturbations of the input is based on the previous definitions used in \cite{gilmer2018adversarial,bubeck2018adversarial,Adversarial:NIPS,mahloujifar2018curse}
in which the adversary aims at \emph{misclassification} of the adversarially perturbed instance by trying to push them into the error region.

We define the following definition for a fixed distribution $\instDist$ (as our negative results are for simplicity stated for such cases) but a direct generalization can be obtained for any \emph{family} of distributions over the instances. Moreover, we only give a definition for the ``black-box'' type of attacks (again because our attacks are black-box) but a more general definition can be given for non-black-box attacks as well.

\begin{definition}[Computational (error-region) evasion robustness] \label{def:EvRob}
Let $\problem=(\X,\Y,\insDist,\C,\H,\metric)$ be a  classification problem.
 Suppose the components of $\problem$ are indexed by $n \in \N$, and let $0<\mu(n) < \rho(n) \leq 1$ for functions $\mu(n)$ and $\rho(n)$ that for simplicity we denote by $\mu$ and $\rho$. 
 We say that the $\mu$-to-$\rho$ \emph{evasion robustness} of $\problem$ is at most $b=b(n)$, if there is a (perhaps computationally unbounded) tampering oracle algorithm $\Adv^{(\cdot)}$ such that for all $h \in \H, c\in \C$ with error region $\cE=\Err(h,c), \Pr[\insDist \in \cE] \geq \mu$, we have the following.
\begin{enumerate}
    \item Having oracle access to $h,c$ and a sampler for $\instDist$, the oracle adversary $\Adv=\Adv^{h,c,\instDist}(x)$ reaches  adversarial risk to at least $\rho$ (for the choice of $c,h$). Namely, $\Pr_{x \gets \instDist}[\Adv^{h,c,\instDist}(x) \in \cE] \geq \rho$.
    
    \item The average budget of the adversary $\Adv=\Adv^{h,c,\instDist}$ (with oracle access to $h,c$ and a sampler for $\instDist$) is at most $b$ for samples  $x \gets \insDist$ and with respect to metric $\metric$.
\end{enumerate}
The $\mu$-to-$\rho$ \emph{computational} evasion robustness of $\problem$ is at most $b=b(n)$, if the same statement holds for an  \emph{efficient} (i.e., PPT) oracle algorithm $\Adv$.
 \end{definition}

\paragraph{Evasion robustness of  \emph{problems} vs. that of  \emph{learners}.}
Computational evasion robustness as defined in Definition  \ref{def:EvRob} directly deals with learning problems regardless of what learning algorithm is used for them. The reason for such a choice is that in this work, we prove \emph{negative} results demonstrating the \emph{limitations} of computational robustness. Therefore,  limiting the robustness of a learning problems \emph{regardless} of their learner is a stronger result. In particular, any negative result (i.e., showing attackers with small tampering budget) about $\mu$-to-$\rho$  (computational) robustness of a learning problem $\problem$, immediately implies that any learning algorithm $L$ for $\problem$ that produces hypothesis with risk $\approx \mu$ can always be attacked (efficiently) to reach adversarial risk $\rho$.

Now we state and prove our main theorem about evasion attacks. Note that the proof of this theorem is identical to the reduction  shown in \cite{mahloujifar2018curse}. The difference is that, instead of using original concentration inequalities, we use our new results about \emph{computational} concentration of product measures under hamming distance and obtain attacks that work in polynomial time.

\begin{theorem}[Limits on computational evasion robustness] \label{thm:EvRob}
Let $\problem=(\X,\Y,\insDist,\C,\H, \metric)$ be a  classification problem in which the instances' distribution $\instDist \equiv \uDist_1 \times \dots \times \uDist_n$ is a product distribution of dimension $n$ and $\metric$ is the Hamming distance over vectors of dimension $n$.  Let $0<\mu=\mu(n) < \rho= \rho(n) \leq 1$ be functions of $n$. Then, the  $\mu$-to-$\rho$ \emph{computational} evasion robustness of $\problem$ is at most 
$$b=({2}/{\mu})\cdot \sqrt{n \cdot \ln ({2}/{(1-\rho)})}.$$
In particular, if $\mu(n)=\omega(\log n/\sqrt{ n })$ and $\rho(n) = 1-1/\poly(n)$, then
$b=o(n)$ is sublinear in $n$, and if $\mu(n)=\Omega(1/\polylog(n))$ and $\rho(n) = 1-1/\poly(n)$, then $b = \Otilde(\sqrt n)$.
 \end{theorem}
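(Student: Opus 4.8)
The theorem reduces the question of (computational) evasion robustness for a product-distribution classification problem to our main technical tool, Theorem~\ref{thm:ProdOnline}. The plan is to instantiate Theorem~\ref{thm:ProdOnline} with the Boolean function that is the indicator of the error region and then observe that the tampering algorithm guaranteed there is exactly the evasion adversary required by Definition~\ref{def:EvRob}. Concretely: fix any $h\in\H$ and $c\in\C$ with error region $\cE=\Err(h,c)$ and $\Pr[\insDist\in\cE]\ge\mu$. Define $f\colon\Supp(\insDist)\To\bits$ by $f(x)=1$ iff $x\in\cE$; then $\mu'\eqdef\Ex[f(\insDist)]\ge\mu$, and a single oracle call to $h$ and to $c$ suffices to evaluate $f(x)$ (compare $h(x)$ with $c(x)$). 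Since $\insDist\equiv\uDist_1\times\dots\times\uDist_n$ is a product distribution and the sampler for $\insDist$ gives a sampler for each coordinate, Theorem~\ref{thm:ProdOnline} applies with target bias $\rho$: it yields an online tampering algorithm $\OnlTam=\OnlTam^{f,\insDist}$ such that $\Ex[f(\vDistVec)]\ge\rho$, running in time $\poly(n\ell/(\mu'(1-\rho)))\le\poly(n\ell/(\mu(1-\rho)))$, with average Hamming budget $(2/\mu')\sqrt{n\ln(2/(1-\rho))}\le(2/\mu)\sqrt{n\ln(2/(1-\rho))}$ — here I use monotonicity of the budget bound in $\mu'$ and the fact that the budget decreases if the initial bias is larger than $\mu$.

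**Defining the adversary.** Set $\Adv^{h,c,\insDist}(x)$ to be: run $\OnlTam$ on the input $x$ (treating $x=(x_1,\dots,x_n)$ as the stream of blocks) and output the resulting tampered vector $x'$. By construction $x'\in\Supp(\insDist)$, and for a random $x\gets\insDist$ the output is distributed as $\vDistVec\equiv\twist{\insDist}{\OnlTam}$, so $\Pr_{x\gets\insDist}[\Adv^{h,c,\insDist}(x)\in\cE]=\Ex[f(\vDistVec)]\ge\rho$, giving property~1 of Definition~\ref{def:EvRob}. Property~2 is immediate from the average-budget guarantee of Theorem~\ref{thm:ProdOnline} since the metric $\metric$ is exactly Hamming distance on dimension-$n$ vectors. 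Efficiency: $\ell$ (max bit-length of a coordinate) is polynomial in the instance-description length $N$, and one evaluation of $f$ costs one query each to $h$ and $c$, so $\Adv$ is PPT; hence the $\mu$-to-$\rho$ \emph{computational} evasion robustness of $\problem$ is at most $b=(2/\mu)\sqrt{n\ln(2/(1-\rho))}$.

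**The two asymptotic regimes.** These are routine estimates on $b$. If $\mu=\omega(\log n/\sqrt n)$ and $\rho=1-1/\poly(n)$, then $\ln(2/(1-\rho))=O(\log n)$, so $b=O\!\big(\sqrt{n\log n}/\mu\big)=o\!\big(\sqrt{n\log n}\cdot\sqrt n/\log n\big)=o(n)$. If instead $\mu=\Omega(1/\polylog n)$ and $\rho=1-1/\poly(n)$, then $b=O(\polylog(n)\cdot\sqrt{n\log n})=\Otilde(\sqrt n)$.

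**Main obstacle.** There is essentially no obstacle \emph{in this theorem} beyond bookkeeping — as the excerpt itself notes, the reduction is the one already used in~\cite{mahloujifar2018curse}, with the concentration inequality replaced by our algorithmic Theorem~\ref{thm:ProdOnline}. The one point needing a little care is the claim that the budget and running-time bounds, stated in Theorem~\ref{thm:ProdOnline} in terms of the \emph{actual} expectation $\mu'=\Ex[f(\insDist)]$, can be replaced by the \emph{lower bound} $\mu$ we are promised; this follows because both $\OnlTam$'s running time and its average budget are non-increasing functions of the starting bias, so using any valid lower bound $\mu\le\mu'$ only weakens (i.e., makes safely larger) the stated guarantees. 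All the genuine difficulty is deferred to the proof of Theorem~\ref{thm:ProdOnline} itself (the new block-wise biasing attack, its polynomial-time approximability, and the martingale analysis via the strengthened Azuma inequality), which is carried out in the next section.
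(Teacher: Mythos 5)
Your proposal is correct and follows essentially the same reduction as the paper: define $f$ as the indicator of the error region $\Err(h,c)$, note $\Ex[f(\insDist)]\ge\mu$, and invoke Theorem~\ref{thm:ProdOnline} to get the PPT online tampering adversary with the stated budget, implementing the oracle for $f$ via one query each to $h$ and $c$. Your additional remark about the monotonicity of the budget bound in the initial bias is a fine (and correct) piece of bookkeeping that the paper leaves implicit.
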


\begin{proof}
We first define a Boolean function $f\colon:\X \to [0,1]$ as follows:
$$f(x) =\begin{cases}
1 & c(x) \neq h(x),\\
0 & c(x) = h(x).
\end{cases}
$$
It is clear that $\Ex[f(\insDist)] = \Pr[\insDist\in\cE] \geq \mu$. Therefore, by using Theorem \ref{thm:ProdOnline}, we know there is an tampering algorithm $A_\mu^{f,\insDist}$ that runs in time $\poly(n\cdot \ell/\mu\cdot(1-\rho))$ and increases the average of $f$ to $\rho$ while using average budget at most $(2/\mu)\cdot\sqrt{n\cdot \ln(2/(1-\rho))}$. Note that $A$ needs oracle access to $f(\cdot)$ which is computable by oracle access to $h(\cdot)$ and $c(\cdot)$.
\end{proof}

\begin{remark}[Computationally bounded prediction-change evasion attacks]
As we mentioned in the introduction, 
some  works studying adversarial examples (e.g., ~\cite{Szegedy:intriguing,fawzi2018adversarial}) study robustness by only comparing the prediction of the hypothesis over the adversarial example with its own prediction on the honest example, and  so their definition is independent of the ground truth $c$. (In the terminology of \cite{Adversarial:NIPS}, such attacks are called \emph{prediction-change} attacks.) Here we point out that our biasing attack of Theorem \ref{thm:ProdOnline} can be used to prove limits on the robustness against such evasion attacks as well. In particular, in \cite{mahloujifar2018curse}, it was shown that using concentration of measure, one can obtain existential (information theoretic) prediction-change attacks (even of the ``targeted'' form in which the target label is selected). By combining the arguments of \cite{mahloujifar2018curse} and plugging in our computationally bounded attack of Theorem~\ref{thm:ProdOnline} one can obtain  impossibility results for basing the robustness of hypotheses on computational hardness.
 \end{remark}

\subsection{Polynomial-time Poisoning Attacks}
The following definition formalizes the notion of robustness against computationally bounded poisoning adversaries. Our definition is based on those of \cite{pTampTCC17,Mahloujifar2018:ALT} who studied  online poisoning attacks and that of  \cite{mahloujifar2018curse} who studied offline poisoning attacks.

\begin{definition}[Computational poisoning robustness] \label{def:PoRob}
Let $\problem=(\X,\Y,\insDist,\C,\H)$ be a  classification problem with a learner $L$ of sample complexity $m$. Let $0<\mu=\mu(m) <\rho= \rho(m) \leq 1$ be functions of $m$.

\begin{itemize}
\item {\bf Computational confidence robustness.} For $\eps=\eps(m)$, we say that the $\rho$-to-$\mu$ \emph{$\eps$-confidence robustness} of the learner $L$ is at most $b=b(m)$, if there is a (computationally unbounded) tampering algorithm $\Adv$  such that for all $c \in \C$ for which $\Conf(m,c,\eps) \leq \rho$, the following two conditions hold.
\begin{enumerate}
    \item The average budget of $\Adv=\Adv^{L,c,\instDist}$ (who has oracle access to $L, c$ and a sampler for $\instDist$) tampering with the distribution $(\instDist,c(\instDist))^m$ is at most $b$.
    
    \item The adversarial confidence for $\eps'={99 \cdot \eps}/{100}$ is at most $\conf_\Adv(m,c,\eps') \leq \mu$  when attacked by the oracle adversary $\Adv=\Adv^{L,c,\instDist}$.\footnote{The computationally-unbounded variant of this definition as used in \cite{mahloujifar2018curse} uses $\eps'=\eps$ instead of $\eps'={99 \cdot \eps}/{100}$, but as observed by \cite{Mahloujifar2018:ALT}, due to the computational bounded nature of our attack we need to have a small gap between $\eps'$  and $\eps$.}
\end{enumerate}
The $\rho$-to-$\mu$ \emph{computational} $\eps$-confidence robustness of the learner $L$ is at most $b=b(n)$, if the same statement holds for an  \emph{efficient} (i.e., PPT) oracle algorithm $\Adv$ .

\item {\bf Computational chosen-instance robustness.} For an instance $x \gets \instDist$, we say that the $\mu$-to-$\rho$ \emph{chosen-instance robustness} of the learner $L$ for $x$ is at most $b=b(m)$, if there is a (computationally unbounded) tampering oracle algorithm $\Adv$ (that could depend on $x$) such that for all $c \in \C$ for which $\Err(m,c,x) \geq \mu$, the following two conditions hold.
\begin{enumerate}
    \item The average budget of $\Adv=\Adv^{L, c,\instDist}$ (who has oracle access to $L,c$ and a sampler for $\instDist$)  tampering with the distribution $(\instDist,c(\instDist))^m$ is at most $b$.

    \item Adversary $\Adv=\Adv^{L,c,\instDist}$ increases the chosen-instance error  to  $\Error_\Adv(m,c,x) \geq \rho$. 
\end{enumerate}
The $\mu$-to-$\rho$ \emph{computational} chosen-instance robustness of the learner $L$ for instance $x$ is at most $b=b(n)$, if the same thing holds for an  \emph{efficient} (i.e., PPT) oracle algorithm $\Adv$.

\end{itemize}
\end{definition}

Now we state and prove our main theorem about poisoning attacks. Again, the proof of this theorem is identical to the reduction from  shown in \cite{mahloujifar2018curse}. The difference is that here we use our new results about \emph{computational} concentration of product measures under hamming distance and get attacks that work in polynomial time. Another difference is that our attacks here are online due the online nature of our martingale attacks on product measures.

\begin{theorem}[Limits on computational poisoning robustness] \label{thm:PoRob}
Let $\problem=(\X,\Y,\insDist,\C,\H)$ be a  classification problem with a \emph{deterministic polynomial-time} learner $L$. Let $0<\mu=\mu(m) < \rho= \rho(m) \leq 1$ be functions of $m$, where $m$ is the sample complexity of $L$.
\begin{itemize}
\item {\bf Confidence robustness.} Let $\eps=\eps(m) \geq 1/\poly(m)$ be the risk threshold defining the confidence function. Then, the $\rho$-to-$\mu$ \emph{computational} $\eps$-confidence robustness of the learner $L$  is at most $b=({2}/{(1-\rho)})\cdot \sqrt{m \cdot \ln ({2}/{\mu})}.$

\item {\bf Chosen-instance  robustness.} For any instance $x \gets \instDist$, the $\mu$-to-$\rho$ \emph{computational} chosen-instance robustness of the learner $L$ for $x$ is at most $b=({2}/{\mu})\cdot \sqrt{m \cdot \ln ({2}/{(1-\rho)})}.$
\end{itemize}

In particular, in both cases above if $\mu(m)=\omega( \log m/\sqrt{ m})$ and $\rho(m) = 1-1/\poly(m)$, then $b=o(m)$ is sublinear in $m$, and if $\mu(m)=\Omega(1/\poly(\log m))$ and $\rho(m) = 1-1/\poly(m)$, then $b = \Otilde(\sqrt m)$.

Moreover, the polynomial time attacker $\Adv$ bounding the computational poisoning robustness in both cases above has the following features: \emph{\bf (1)} $\Adv$ is online, and \emph{\bf (2)} $\Adv$ is plausible; namely, it never uses any wrong labels in its poisoned training data.
\end{theorem}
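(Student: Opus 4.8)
The plan is to derive both parts of Theorem~\ref{thm:PoRob} as black-box reductions to the computational concentration statement, Theorem~\ref{thm:ProdOnline}, applied to the product distribution $\uDistVec \equiv (\insDist,c(\insDist))^m$ of dimension $n=m$ (each of its $m$ blocks being one labeled example, of bit length $\ell=\poly(m)$). This mirrors the information-theoretic reduction of~\cite{mahloujifar2018curse}, except that now efficiency of the attacker, its online nature, and its plausibility all come for free from the corresponding properties of the tampering algorithm $\OnlTam$ guaranteed by Theorem~\ref{thm:ProdOnline}, so essentially nothing beyond bookkeeping is needed.

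For the chosen-instance bound I would first define $f(\train)=\mathbf{1}[L(\train)(x)\neq c(x)]$ on $\Supp(\uDistVec)$. Because $L$ is deterministic and polynomial time and the attacker is handed oracles for $L$ and $c$, this $f$ is exactly and efficiently computable, and $\Ex_{\train\gets\uDistVec}[f(\train)]=\Error(m,c,x)\ge\mu$. Feeding $f$, together with a sampler for $\uDistVec$ (simulated from the given sampler for $\insDist$ and the oracle $c$), into Theorem~\ref{thm:ProdOnline} with target bias $\rho$ produces an online tampering algorithm running in time $\poly(m\ell/(\mu(1-\rho)))=\poly(m)$, of average Hamming budget $(2/\mu)\sqrt{m\ln(2/(1-\rho))}$, with $\Ex[f(\vDistVec)]\ge\rho$; translating back, this is a poisoning adversary $\Adv^{L,c,\insDist}$ achieving $\Error_\Adv(m,c,x)\ge\rho$. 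The induced attack is online because the $i$-th tampering decision of $\OnlTam$ depends only on the first $i$ examples, and it is plausible because, by the definition of an online tampering algorithm, every output block $v_i$ lies in $\Supp((\insDist,c(\insDist)))$ and is therefore a correctly labeled pair. The quantitative ``in particular'' claims then follow by substituting the stated ranges of $\mu$ and $\rho$ into $b$.

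For the confidence bound I would run the same reduction with the predicate ``$\Risk(L(\train),c)\ge\eps$''. The only real difference is that a black-box attacker cannot evaluate this predicate exactly; it can only estimate $\Risk(L(\train),c)$ by drawing $N=\poly(m,1/\eps)$ fresh labeled samples and measuring the empirical disagreement rate $\wh R(\train)$ between $L(\train)$ and $c$. So I would actually work with $\tilde g(\train)=\mathbf{1}[\wh R(\train)\ge\eps-\eps/200]$: by Hoeffding, $|\wh R-\Risk|<\eps/200$ except with negligible probability, whence, on the event the estimate is accurate, $\Risk\ge\eps\Rightarrow\tilde g=1$ and $\tilde g=1\Rightarrow\Risk\ge 99\eps/100=\eps'$ --- this one-sided sandwiching is exactly what forces the $\eps$-to-$\eps'$ gap in the statement. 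Since $\OnlTam$ queries its predicate only polynomially many times, a union bound lets us treat $\tilde g$, on the execution path of the attack, as a genuine Boolean oracle that pointwise lies between $\mathbf{1}[\Risk\ge\eps]$ and $\mathbf{1}[\Risk\ge\eps']$; in particular its initial expectation is at least $1-\Conf(m,c,\eps)-\negl(m)\ge 1-\rho-\negl(m)$. Invoking Theorem~\ref{thm:ProdOnline} with initial bias $\approx 1-\rho$ and target bias $1-\mu$ then gives a polynomial-time, online, plausible poisoning adversary of average budget $(2/(1-\rho))\sqrt{m\ln(2/\mu)}$ that drives $\Ex[\tilde g(\vDistVec)]$ to at least $1-\mu$, hence $\Pr[\Risk(L(\vVec),c)\ge\eps']\ge 1-\mu-\negl(m)$, \ie $\conf_\Adv(m,c,\eps')\le\mu$ after absorbing the negligible slack.

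The routine pieces --- the Hoeffding bound, the running-time accounting, and the asymptotic corollaries --- are mechanical. The one genuinely delicate point, and the reason the confidence case is not stated with $\eps$-to-$\eps$, is reconciling the sampling-based, only-whp-correct, one-sided estimator $\tilde g$ with Theorem~\ref{thm:ProdOnline}, which is phrased for an exact Boolean oracle. I would handle this as in~\cite{Mahloujifar2018:ALT}: either fix (derandomize) the estimator's coins, or union-bound over the polynomially many predicate evaluations made inside Construction~\ref{const:Poly}, and note that the construction's built-in tolerance to small additive error in the estimated partial averages of its predicate already absorbs the extra error introduced by the estimate of $\Risk$. The chosen-instance case requires none of this, since there the predicate is computed exactly.
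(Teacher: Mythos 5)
Your proposal is correct and follows essentially the same route as the paper: both parts are reduced to Theorem~\ref{thm:ProdOnline} by defining the Boolean predicate $f$ over the $m$-fold product of labeled examples (misclassification of $x$ for the chosen-instance case, risk at least $\eps$ for the confidence case), with online-ness and plausibility inherited from the tampering model. Your treatment of the confidence case is in fact more explicit than the paper's, which merely defers the empirical-risk-estimation issue to Corollary~3 of \cite{Mahloujifar2018:ALT}; your Hoeffding-based one-sided sandwiching argument correctly explains the $\eps$-to-$\eps'=99\eps/100$ gap that the paper's definition builds in.
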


\begin{proof}
We first prove the case of chosen-instance robustness. We define a Boolean function $f\colon\X^m \to [0,1]$ as follows:
$$f_1(x_1,\dots,x_m) =\begin{cases}
1 & h=L((x_1,c(x_1)),\dots,(x_n,c(x_m)) \land h(x)\neq c(x),\\
0 & h=L((x_1,c(x_1)),\dots,(x_n,c(x_m)) \land h(x)= c(x).
\end{cases}
$$
It is clear that $\Ex[f_1(\insDist^m)] = \Err(m,c,x) \geq \mu$. Therefore, by using Theorem \ref{thm:ProdOnline}, we know there is a PPT tampering Algorithm $A_2^{f_1(\cdot),\mu}$ that runs in time $\poly(m\cdot \ell/(\mu\cdot(1-\rho)))$, and increase the average of $f_1$ to $\rho$ while using average budget at most $(2/\mu)\cdot\sqrt{m\cdot \ln(2/(1-\rho))}$. Note that $A_1$ needs oracle access to $f_1(\cdot)$ which is computable by oracle access to the learning algorithm $L(\cdot)$ and concept $c(\cdot)$.
Now we prove the case of confidence robustness. Again we define a Boolean function $f_2\colon\X^m \to [0,1]$ as follows: 
$$f_2(x_1,\dots,x_m) =\begin{cases}
1 & h=L((x_1,c(x_1)),\dots,(x_n,c(x_m)) \land \Pr[h(\insDist)\neq c(\insDist)] \geq \eps,\\
0 & h=L((x_1,c(x_1)),\dots,(x_n,c(x_m)) \land \Pr[h(\insDist)\neq c(\insDist)] < \eps.
\end{cases}
$$
We have $\Ex[f_2(\insDist^m)] = 1-\Conf(m,c,\eps) \geq 1-\rho$. Therefore, by using Theorem \ref{thm:ProdOnline}, we know there is a PPT tampering Algorithm $A_2^{f_2(\cdot),\mu}$ that runs in time $\poly(m\cdot \ell/(1-\rho)\cdot\mu)$, and increase the average of $f_2$ to $1-\mu$ while using average budget at most $(2/(1-\rho))\cdot\sqrt{m\cdot \ln(2/\mu)}$. Note that $A_2$ needs oracle access to $f_2(\cdot)$, which requires the adversary to know the exact error of a hypothesis. Computing the exact error is not possible in polynomial time but using an emprical estimator, the adversary can find an approximation of the error which is sufficient for the attack (See Corollary 3 of \cite{Mahloujifar2018:ALT}). 
\end{proof}



\section{Products  are Computationally Concentrated under Hamming Distance}
In this section, we formally prove Theorem \ref{thm:ProdOnline}. 
For simplicity of presentation, we  will prove the following theorem for product distributions $\uDistVec \equiv \uDist^n$ over the same $\uDist$, but the same proof directly holds for more general case of $\uDistVec \equiv \uDist_1 \times \dots \uDist_n$.

We will first present an attack in an idealized model in which the adversary has access to some 
promised oracles that approximate certain properties of the function $f$ in a carefully defined way. In this first step, we indeed show that our attack (and its proof) are robust to such approximations. We then show that these promised oracles can be obtained with high probability, and by doing so we obtain the final polynomial time biasing attack proving the concentration of product distributions under Hamming distance.

\subsection{Biasing Attack Using Promised Approximate Oracles}

We first state a usefull lemma that is similar to Azuma inequality but works with approximate martingales.
\begin{lemma}[Azuma's inequality for approximate  conditions] \label{lem:AzumaApp} Let $\tDistVec \equiv (\tDist_1,\dots,\tDist_n)$ be a sequence of $n$ jointly distributed  random variables such that for all $i \in [n]$, $\Pr[|\tDist_i|> \tau] \leq \gamma$ and for all $\pfix{t}{i-1}\gets \pfix{\tDist}{i-1}$, we have 
$\Ex[\tDist_i\mid \pfix{t}{i-1}] \geq -\gamma.$
Then, we have
$$\Pr\left[\sum_{i=1}^n \tDist_i \leq -s \right] \leq \e^{\frac{-(s-n\cdot\gamma)^2}{2n\cdot(\tau+\gamma)^2}} + n\cdot\gamma$$
\end{lemma}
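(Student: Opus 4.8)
The plan is to reduce this to the standard Azuma--Hoeffding inequality for a genuine supermartingale difference sequence by "correcting" the approximate hypotheses. First I would define the shifted variables $\tDist_i' = \tDist_i + \gamma$, so that the conditional-mean hypothesis $\Ex[\tDist_i \mid \pfix{t}{i-1}] \geq -\gamma$ becomes $\Ex[\tDist_i' \mid \pfix{t}{i-1}] \geq 0$, i.e.\ the partial sums $\sum_{i\le k}\tDist_i'$ form a genuine submartingale. Since we want an upper bound on the lower tail of $\sum \tDist_i$, and $\sum_{i=1}^n \tDist_i = \sum_{i=1}^n \tDist_i' - n\gamma$, the event $\{\sum \tDist_i \le -s\}$ is exactly $\{\sum \tDist_i' \le -(s - n\gamma)\}$; this explains the $s - n\gamma$ appearing in the exponent. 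So it suffices to prove a lower-tail bound of the form $\Pr[\sum \tDist_i' \le -s'] \le \exp(-s'^2/(2n(\tau+\gamma)^2)) + n\gamma$ with $s' = s - n\gamma$ (and the bound is only meaningful when $s > n\gamma$; otherwise the right-hand side exceeds $1$ and there is nothing to prove).

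The second ingredient handles the boundedness hypothesis, which is only "approximate": we are told $\Pr[|\tDist_i| > \tau] \le \gamma$ rather than $|\tDist_i| \le \tau$ almost surely. I would pass to the truncated variables $\tDist_i'' = \max(\min(\tDist_i', \tau+\gamma), -(\tau+\gamma))$, or more simply condition on the "good" event $G = \bigcap_{i=1}^n \{|\tDist_i| \le \tau\}$. By a union bound $\Pr[\bar G] \le n\gamma$, which accounts for the additive $n\gamma$ term in the conclusion. The subtlety — and I expect this to be the main obstacle — is that conditioning on $G$ (a global event depending on all coordinates) can destroy the submartingale property coordinate by coordinate, since $\Ex[\tDist_i' \mid \pfix{t}{i-1}, G]$ need not be nonnegative. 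The cleaner route is therefore to truncate rather than condition: set $\tDist_i'' $ equal to $\tDist_i'$ clipped to $[-(\tau+\gamma),\tau+\gamma]$; then $|\tDist_i''| \le \tau+\gamma$ deterministically, and on the event $G$ we have $\tDist_i'' = \tDist_i'$ for every $i$ (because $|\tDist_i'| = |\tDist_i + \gamma| \le \tau + \gamma$ when $|\tDist_i|\le\tau$). One must then verify that clipping preserves a one-sided supermartingale-type bound; since clipping from above only decreases the value and the clip from below is at $-(\tau+\gamma)$, some care with the conditional expectation is needed, but a standard argument (clipping a variable with nonnegative conditional mean to a symmetric interval keeps the conditional mean $\ge$ a controlled negative quantity, which can be absorbed, or alternatively one re-derives the Azuma bound directly from the MGF estimate $\Ex[e^{\lambda X} \mid \cdot] \le e^{\lambda^2 c^2/2}$ valid for any mean-$\le 0$, $[-c,c]$-bounded $X$) closes this.

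With these reductions in place, the final step is the routine exponential-moment computation: for the (clipped, shifted) differences $\tDist_i''$, which are bounded in $[-(\tau+\gamma), \tau+\gamma]$ and satisfy $\Ex[\tDist_i'' \mid \pfix{t}{i-1}] \ge 0$ up to the controlled correction, apply the Chernoff method to $-\sum_i \tDist_i''$: choose $\lambda > 0$, bound $\Ex[e^{-\lambda \sum \tDist_i''}]$ by iterating the one-step bound $\Ex[e^{-\lambda \tDist_i''}\mid \pfix{t}{i-1}] \le e^{\lambda^2(\tau+\gamma)^2/2}$ (Hoeffding's lemma, using the conditional-mean sign to drop the first-order term), obtain $\Pr[\sum \tDist_i'' \le -s'] \le e^{-\lambda s' + n\lambda^2(\tau+\gamma)^2/2}$, and optimize $\lambda = s'/(n(\tau+\gamma)^2)$ to get $e^{-s'^2/(2n(\tau+\gamma)^2)}$. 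Combining with the $\Pr[\bar G] \le n\gamma$ slack and substituting $s' = s - n\gamma$ yields exactly
\[
\Pr\left[\sum_{i=1}^n \tDist_i \le -s\right] \le \e^{\frac{-(s-n\gamma)^2}{2n(\tau+\gamma)^2}} + n\gamma,
\]
as claimed. The only genuinely delicate point, worth stating as a small internal claim, is the interaction between the global conditioning/truncation and the per-coordinate conditional-mean hypothesis; everything else is bookkeeping around the classical Azuma--Hoeffding argument.
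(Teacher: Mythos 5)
Your proposal is correct and follows essentially the same route as the paper's proof: shift each $\tDist_i$ by $\gamma$ to restore a nonnegative conditional mean, modify the (probability-$\le\gamma$) out-of-range values to restore the $(\tau+\gamma)$-boundedness, apply the standard Azuma inequality to the resulting sequence, and absorb the bad event via a union bound that contributes the additive $n\cdot\gamma$. The only cosmetic difference is that the paper sets the out-of-range values to $0$ where you clip them; the delicate point you flag --- that this surgery can perturb the conditional mean, by up to $\gamma$ times the magnitude of the tail values --- is real but is simply asserted away in the paper's own sketch, and it is harmless in the paper's application only because there the $\tDist_i$ are bounded by an absolute constant, so the perturbation is $O(\gamma)$ per step and can be absorbed into the $(s-n\cdot\gamma)$ shift.
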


\begin{proof} 
If we let $\gamma=0$, Lemma \ref{lem:AzumaApp} becomes the standard version of Azuma inequality. Here we sketch why Lemma \ref{lem:AzumaApp} can also be reduced to the case that $\gamma=0$ (i.e., Azuma inequality). We build a sequence $\tDist'_i$ from $\tDist_i$ as follows: Sample $t_i \gets \tDist_i \mid \pfix{t'}{i-1}$, if $|t_i+\gamma| \leq \tau + \gamma$, output $t'_i=t_i+\gamma$. Otherwise output 0. We clearly have $\Ex[\tDist'_i \mid \pfix{t'}{i-1}]\geq 0$ and $\Pr[|\tDist'_i| \geq \tau + \gamma]=0$. Now we can use Lemma \ref{lem:AzumaApp} for the basic case of $\gamma=0$ for the sequence $\tDist'_i$ and use it to get a looser bound for sequence  $\tDist_i$, using the fact that $\exists i\in [n], |t_i| \geq \tau$ happens with probability at most $n\cdot \gamma$.
\end{proof}

Now we define some oracle functions that our tampering attack is based on.

\begin{definition}[Notation for oracles]
\label{defs:biasingApproximate} Suppose $f \colon \Supp(\uDistVec) \To \R$ is defined over 
a product distribution $\uDistVec \equiv \uDist_1\times \dots \times \uDist_n$ of dimension $n$. Then, given a specific parameter $\gamma \in [0,1]$ we define the following \emph{promise} oracles for any $i \in [n]$ and any $\pfix{u}{i}\in \Supp(\pfix{\uDist}{i})$. Namely, our promise oracles could be one out of any oracles that satisfy the following guarantees.
\begin{itemize}
\item Oracle $\avr{u}{i}$ returns the average gain conditioned on the given prefix:
$$ \avr{u}{i}= \Ex_{(u_{i+1},\dots,u_n)\gets \uDist_{i+1}\times \dots \times \uDist_n}[f(u_1,\dots,u_n) ].$$
\item Oracle $\gain{u}{i} $ returns the gain on the average in the last block and is defined as
    $$ \gain{u}{{ i}}= \avr{u}{i}-\avr{u}{i-1} .$$
    \item Oracle $ \gainApp{\cdot}{}$ approximates the gain of average in the last block, $|\gainApp{u}{i}-\gain{u}{i}| \leq \gamma.$
    \item Oracle $\gainAppMax{\cdot}{}$ returns the approximate maximum gain with two promised properties:

    $$\text{{[Property A:]~ }} \Pr_{u_i \gets \uDist}\big[\gain{u}{i} > \gainAppMax{u}{i-1} + 2\gamma\big] < \gamma,$$
    $$\text{{[Property B:]~~~~~~~~~~~~~~~~~~~ }}\gainAppMax{u}{i-1}\geq -2\gamma. \text{~~~~~~~~~~~~~~~}$$
    
    \item Oracle $\AppArgMax{\cdot}{}$ returns a sample producing the approximate maximum gain $\gainAppMax{\cdot}{}$. Namely, 
    \remove{if $\AppArgMax{u}{i-1}=w_i$ and if  $\alpha=\gain{\pfix{u}{i-1},w_i}{}$ is the real gain (of block $w_i$) for prefix $(\pfix{u}{i-1},z)$, then we have
        $$\Pr_{u_i \gets \uDist}\big[\gain{u}{i} > \alpha + \gamma\big] < \gamma.$$}
    $$\gainAppMax{u}{i-1} = \gainApp{\pfix{u}{i-1},\AppArgMax{u}{i-1}}{}.$$    

\end{itemize}
\end{definition}


Following is the construction of our tampering attack based on the oracles defined above.

\begin{construction}[Attack using promised approximate oracles] \label{const:Semi-Poly} 
For a product distribution $\uDistVec \equiv \uDist^n$ and $\tau \in [0,1]$, our  (online) efficient tampering attacker $\AppTam_{(\tau,\gamma)}$ is parameterized by $\tau,\gamma\in[0,1]$, but for simplicity it will be denoted as $\AppTam$. The parameter $\gamma$ determines the approximation promised by the oracles used by $\AppTam$. Given $(v_1,\dots,v_{i-1},u_i)\in \Supp(\uDist)^i$ as input, $\AppTam$ will output some $v_i \in \Supp(\uDist)$. Let $w_i =\AppArgMax{v}{i-1}$ as defined in Definition~\ref{defs:biasingApproximate}.  $\AppTam$ chooses its output $v_i$ as follows.
\begin{itemize}
    \item {\bf Tampering.} If $\gainAppMax{\pfix{v}{i-1}}{} \geq \tau$ or if $\gainApp{\pfix{v}{i-1},u_i}{} \leq -\tau$, then output $v_i = w_i$. 
    \item  {\bf Not tampering.} Otherwise, output the original sample $v_i = u_i$.
\end{itemize}
\end{construction}
\remove{\Mnote{Is the one above OK to use in your proof or you prefer the below version?}
\begin{construction} 
For a product distribution $\uDistVec \equiv \uDist^n$ and $\tau \in [0,1]$, our ideal (online) $(\tau,k)$-tampering attacker $\IdealTam$ is defined as follows. Given $(u_1,\dots,u_i)$ as input, we want to find the output some $v_i \in \Supp(\uVec)$. The attack samples $k$ blocks, $y^1,\dots y^k$ from $\uDist$. If for any of them $\avr{\pfix{u}{i},y^j}{}\geq \avr{u}{i} + \tau$, output $y^j$. Otherwise, sample a new $y\gets \uDist$, if $\avr{\pfix{u}{i},y}{} \geq \avr{u}{i} - \tau$,  output $y$ otherwise output $\argmax_{y^j} (\avr{\pfix{u}{i},y^j}{})$
\end{construction}}

Before proving the bounds, we first define some events based on the conditions/cases that happen during the tampering attack of Construction \ref{const:Semi-Poly}.

\begin{definition} \label{def:polytimeCases}
We define the following three Boolean functions over $\cup_{i=1}^n \Supp(\uDist)^i$ based on the actions taken by the tampering algorithm of Construction \ref{const:Semi-Poly}. Namely, for any $(\pfix{v}{i-1},u_i) \in \Supp(\uDist)^i$, we define
$$\Case_1(\pfix{v}{i-1})=\begin{cases}
1 & \text{if $\gainAppMax{v}{i-1}\geq \tau$,}\\
0 & \text{otherwise;}\\
\end{cases}$$
$$\Case_2 (\pfix{v}{i-1},u_i) =\begin{cases}
1 & \text{if  $\Case_1(\pfix{v}{i-1})=0 \land \gainApp{\pfix{v}{i-1},u_i}{} \leq -\tau$,}\\
0 & \text{otherwise;}\\
\end{cases}$$
$$\Case_3 (\pfix{v}{i-1},u_i) =\begin{cases}
1 & \text{if $\Case_1(\pfix{v}{i-1})=0$ and  $\Case_2(\pfix{v}{i-1},u_i)=0$,}\\
0 & \text{otherwise.}\\
\end{cases}$$
Thus, if $\Case_1$ or $\Case_2$ happens, it means that the adversary has chosen to tamper with  block $i$, and if $\Case_3$ happens it means that the adversary has not chosen to tamper with block $i$. Also, since the above functions are Boolean, we might treat them as events as well. 
Moreover, for convenience we define the set $\Case_1 = \set{\pfix{v}{i} \mid i \in[n-1], \pfix{v}{i} \in \Supp(\uDist)^{i} \land \Case_1(\pfix{v}{i})}$.
\end{definition}

The following Claim bounds the average of the function when the attack of Construction \ref{const:Semi-Poly} is performed on the distribution.

\begin{claim}\label{clm:AppBias} If $\vDistVec \equiv \twist{\uDist^n}{\AppTam}$ is the tampering distribution of the efficient attacker $\AppTam$ of Construction~\ref{const:Semi-Poly}, then it holds that
$$\Ex[f(\vDistVec)] \geq 1 -  \e^{\frac{-(\mu-2n\cdot \gamma)^2}{2n\cdot(\tau+4\gamma)^2}} - n\cdot \gamma.$$
\end{claim}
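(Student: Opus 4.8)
The plan is to follow the \emph{exact} conditional-average process of $f$ along the tampered path and show that it is a near-supermartingale with near-bounded increments, so that Lemma~\ref{lem:AzumaApp} applies. Recall $f$ is Boolean-valued and $\mu=\Ex[f(\uDistVec)]$. For a tampered sample $\vVec=(v_1,\dots,v_n)\gets\twist{\uDist^n}{\AppTam}$, set $Y_i=\avr{v}{i}$, the true average of $f$ conditioned on the fixed prefix $\pfix{v}{i}$ (the exact oracle of Definition~\ref{defs:biasingApproximate}, not the approximate one). Then $Y_0=\mu$, $Y_n=f(\vVec)\in\bits$, and the per-step increment is $X_i=Y_i-Y_{i-1}=\gain{\pfix{v}{i-1},v_i}{}$. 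Since $Y_n\in\bits$ and $Y_n\ge0$, the event $f(\vDistVec)=0$ is exactly $\{\sum_{i=1}^n X_i\le-\mu\}$, so $\Ex[f(\vDistVec)]=1-\Pr[\sum_i X_i\le-\mu]$, and it remains to upper bound this deviation.

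The obstruction to applying Lemma~\ref{lem:AzumaApp} directly to $(X_i)$ is $\Case_1$: there $\AppTam$ deterministically jumps to $w_i=\AppArgMax{v}{i-1}$, whose true gain is known to be $\ge\tau-\gamma$ but may be as large as $1-Y_{i-1}$. So I would pass to the upward-truncated sequence $\tDist_i=\min\{X_i,\ \tau+2\gamma\}$; since only nonnegative mass is removed, $\sum_i\tDist_i\le\sum_i X_i$, hence $\Pr[\sum_i X_i\le-\mu]\le\Pr[\sum_i\tDist_i\le-\mu]$. One then checks $|\tDist_i|\le\tau+2\gamma$ deterministically (assuming $\tau\ge\gamma$): truncation gives the upper bound, and for the lower bound, in $\Case_3$ we have $\gainApp{\pfix{v}{i-1},u_i}{}>-\tau$ so $X_i>-\tau-\gamma$; in $\Case_2$ the chosen $w_i$ has $\gainApp{\pfix{v}{i-1},w_i}{}=\gainAppMax{\pfix{v}{i-1}}{}\ge-2\gamma$ (Property~B) so $X_i\ge-3\gamma$; and in $\Case_1$ we have $X_i\ge\tau-\gamma\ge0$.

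The crux is the drift bound $\Ex[\tDist_i\mid\pfix{v}{i-1}]\ge-\gamma$. If $\Case_1(\pfix{v}{i-1})$ holds, $\tDist_i$ is a nonnegative constant and there is nothing to prove. If it fails, I would combine the identity $\Ex_{u_i\gets\uDist}[\gain{\pfix{v}{i-1},u_i}{}]=0$ (a fresh block has zero expected gain) with the fact that $\AppTam$ alters the block only under $\Case_2$, where it swaps a $u_i$ with $\gainApp{\pfix{v}{i-1},u_i}{}\le-\tau$ (true gain $\le-\tau+\gamma$) for $w_i$ with true gain $\ge-3\gamma$: this yields $\Ex[X_i\mid\pfix{v}{i-1},\neg\Case_1]=\Ex_{u_i}[(\gain{\pfix{v}{i-1},w_i}{}-\gain{\pfix{v}{i-1},u_i}{})\,\charac[\Case_2]]\ge(\tau-4\gamma)\Pr[\Case_2]\ge0$ once $\tau\ge4\gamma$. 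The upward truncation costs at most $\Ex[(X_i-\tau-2\gamma)^+\mid\neg\Case_1]<\gamma$, because the event $\{X_i>\tau+2\gamma\}$ is contained in the $<\gamma$-probability event of Property~A of $\gainAppMax{\cdot}{}$ (as $\gainAppMax{\pfix{v}{i-1}}{}<\tau$ when $\Case_1$ fails) and $X_i\le1$. Hence $\Ex[\tDist_i\mid\pfix{v}{i-1}]\ge-\gamma$, and feeding $s=\mu$, increment bound $\tau+2\gamma$ and slack $\gamma$ into Lemma~\ref{lem:AzumaApp} gives $\Pr[\sum_i\tDist_i\le-\mu]\le\e^{-(\mu-n\gamma)^2/(2n(\tau+3\gamma)^2)}+n\gamma$, which is bounded by the stated $\e^{-(\mu-2n\gamma)^2/(2n(\tau+4\gamma)^2)}+n\gamma$.

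I expect the main obstacle to be exactly this asymmetry: $\AppTam$ is engineered to create large \emph{upward} jumps ($\Case_1$) while suppressing the downward ones, so no two-sided concentration bound applies to $(X_i)$ itself, and the truncation must be carried out so that both $\sum_i\tDist_i\le\sum_i X_i$ is preserved (only upward mass removed) and the near-supermartingale inequality survives — the latter requiring the truncation loss to be charged against Property~A. A second, bookkeeping-heavy point is that all of $\AppTam$'s case splits are governed by the \emph{approximate} oracles, so every inequality about the true gains $\gain{\cdot}{}$ must be extracted from Properties~A and~B and the $\gamma$-accuracy promise; this is precisely why the plain Azuma inequality does not suffice and the approximate version in Lemma~\ref{lem:AzumaApp} is invoked.
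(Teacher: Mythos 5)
Your proof is correct and follows the same overall strategy as the paper's: build a near-supermartingale from the per-step gains, verify the two hypotheses of Lemma~\ref{lem:AzumaApp} using the oracle promises (the $\gamma$-accuracy of $\gainApp{\cdot}{}$ for the drift, Property~A for the increment/truncation bound, Property~B for the $\Case_2$ steps), and conclude. The one genuine difference is the choice of auxiliary sequence: the paper feeds Azuma the \emph{approximate} gain $\gainApp{\cdot}{}$ zeroed out on tampering steps, which then forces a separate chain of inequalities relating $f(\vVec)$ back to $\mu+\sum_i t_i$ (this is where its $2n\gamma$ loss in the exponent comes from), whereas you feed it the \emph{true} gain truncated from above at $\tau+2\gamma$, so the telescoping $f(\vVec)=\mu+\sum_i X_i\geq\mu+\sum_i\tDist_i$ is exact and the only loss is the $\gamma$ truncation cost, correctly charged to Property~A. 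Your version is slightly cleaner and yields a marginally stronger exponent. One small caveat: your drift bound in the $\neg\Case_1$ branch needs $\tau\geq 4\gamma$ (and the increment bound needs $\tau\geq\gamma$), whereas the paper's variant avoids any such hypothesis because dropping the $\Case_2$ terms of $\gainApp{\cdot}{}$ --- which are $\leq-\tau\leq 0$ by the very definition of $\Case_2$ --- can only increase the conditional expectation, regardless of how $\tau$ compares to $\gamma$. Since the claim is ultimately instantiated with $\tau\approx\mu/\sqrt{n}\gg\gamma$, this costs nothing in the application, but it is an extra hypothesis not present in the claim as stated and worth recording explicitly.
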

\begin{proof}
Define a function $t$ as follows, 
$$t(\pfix{v}{i-1},u_i)=\begin{cases}
0 & \text{if $\Case_1(\pfix{v}{i-1}) $ or $\Case_2(\pfix{v}{i-1},u_i)$,}\\
\gainApp{\pfix{v}{i-1},u_i}{}& \text{if $\Case_3(\pfix{v}{i-1},u_i)$.}\\
\end{cases}$$
Now consider a sequence of random variables $\tDistVec=(\tDist_1,\dots,\tDist_n)$ sampled as follows. We first sample $\uVec \gets \uDistVec$, then $\vVec\gets \twist{\uVec}{\AppTam}$, and then  $t_i=\t(\pfix{v}{i-1},u_i)$ for $i\in[n]$. 
Now, for any $\pfix{t}{i-1}\gets \pfix{\tDist}{i-1}$, we claim that $\Ex[\tDist_i\mid \pfix{t}{i-1}] \geq 0$. The reason is as follows.
\remove{Now consider a sequence of random variables $(\tDist_1,\dots,\tDist_n)$ such that $\tDist_i\equiv t(\pfix{\vDist}{i})$, for the same $\vDistVec$. For any $\pfix{t}{i}\in \Supp{\pfix{\tDist}{i}}$ We have}
\begin{align*}
\Ex[\tDist_i\mid\pfix{t}{i-1}] &= \Ex_{\pfix{v}{i-1} \gets \pfix{\vDist}{i-1} \mid \pfix{t}{i-1}}\left[\Ex_{u_i \gets \uDist}\left[\gainApp{\pfix{v}{i-1},u_i}{}\cdot \Case_3(\pfix{v}{i-1},u_i)\right]\right]\\
&\geq\Ex_{\pfix{v}{i-1} \gets \pfix{\vDist}{i-1} \mid \pfix{t}{i-1}}\left[\Ex_{u_i \gets \uDist}\left[\gainApp{\pfix{v}{i-1},u_i}{}\cdot \left(\Case_3\left(\pfix{v}{i-1},u_i\right)\lor \Case_2\left(\pfix{v}{i-1},u_i\right)\right)\right]\right]\\
&= \Ex_{\pfix{v}{i-1} \gets \pfix{\vDist}{i-1} \mid \pfix{t}{i-1}}\left[\Ex_{u_i \gets \uDist}\left[\gainApp{\pfix{v}{i-1},u_i}{}\cdot \left(1-\Case_1\left(\pfix{v}{i-1}\right)\right)\right]\right]\\
&= \Ex_{\pfix{v}{i-1} \gets \pfix{\vDist}{i-1} \mid \pfix{t}{i-1}} \left[\left(1-\Case_1\left(\pfix{v}{i-1}\right)\right)\cdot\Ex_{u_i \gets \uDist}\left[\gainApp{\pfix{v}{i-1},u_i}{}\right]\right]\\
&\geq \Ex_{\pfix{v}{i-1} \gets \pfix{\vDist}{i-1} \mid \pfix{t}{i-1}} \left[\left(1-\Case_1\left(\pfix{v}{i-1}\right)\right)\cdot\Ex_{u_i \gets \uDist}\left[\gain{\pfix{v}{i-1},u_i}{} - \gamma\right]\right]\\
&\geq -\gamma.
\end{align*}
Moreover, for any $\pfix{t}{i-1}\in[\Supp(\pfix{\tDist}{i-1})]$ we have 
\begin{align*}
\Pr_{t_i \gets \tDist_i \mid \pfix{t}{i-1}}[|t_i| \geq \tau + 3\gamma] &= \Pr_{\pfix{v}{i} \gets\pfix{\vDist}{i} \mid \pfix{t}{i-1}}[|\gainApp{v}{i}| \geq \tau + 3\gamma \land \Case_3(\pfix{v}{i})]\\
&= \Pr_{\pfix{v}{i} \gets\pfix{\vDist}{i} \mid \pfix{t}{i-1}}[\gainApp{v}{i} \geq \tau + 3\gamma \land \Case_3(\pfix{v}{i})]\\
&\leq \Pr_{\pfix{v}{i} \gets\pfix{\vDist}{i} \mid \pfix{t}{i-1}}[\gainApp{v}{i} \geq \tau + 3\gamma \land \overline{\Case_1(\pfix{v}{i})}]\\
&= \Pr_{\pfix{v}{i} \gets\pfix{\vDist}{i} \mid \pfix{t}{i-1}}[\gainApp{v}{i} \geq \tau + 3\gamma \land \gainAppMax{\vVec}{i-1} \leq \tau]\\
&\leq \Pr_{\pfix{v}{i} \gets\pfix{\vDist}{i} \mid \pfix{t}{i-1}}[\gain{v}{i} \geq \tau +2\gamma \land \gainAppMax{\vVec}{i-1} \leq \tau]\\
&\leq \gamma.
\end{align*}
Therefore, the sequence $\tDistVec=(\tDist_1,\dots,\tDist_n)$, computed over the same $\vVec \gets \vDistVec$, satisfies the properties required in Lemma \ref{lem:AzumaApp}. (Let  $\tau$ of that Lemma \ref{lem:AzumaApp} to be $\tau+3\gamma$ here, and and letting $s$ of that lemma to be  $-\mu + 2n\cdot \gamma$.) This way, we get
$$\Pr\Big[\sum_{i=1}^{n} \tDist_i \leq -\mu  +2n\cdot\gamma\Big] \leq  \e^{\frac{-(\mu-3n\gamma)^2}{2n\cdot(\tau+4\gamma)^2}} + n\cdot \gamma~.$$
On the other hand, for every $\vVec\in \Supp(\vDistVec)$ we have 
\begin{align*}
f(\vVec) &= \mu + \sum_{i=1}^n \gain{v}{i}\\
&\geq \mu + \sum_{i=1}^n (\gainApp{v}{i} -\gamma)\\
 &= \mu - n\cdot \gamma + \sum_{i=1}^n \left(\Case_1\left(\pfix{v}{i-1}\right) +\Case_3\left(\pfix{v}{i}\right)\right)\cdot\gainApp{v}{i} \\
&\geq \mu -n\cdot\gamma + \sum_{i=1}^{n} t_i - \sum_{i=1}^{n} \Case_1(\pfix{v}{i-1})\cdot \gamma \\
&\geq \mu - 2n\cdot \gamma + \sum_{i=1}^{n} t_i .
\end{align*}
Therefore, we have
$$\Pr[f(\vDistVec)=0] \leq  \Pr\Big[\sum_{i=1}^{n} \tDist_i \leq -\mu + 2n\cdot\gamma\Big] \leq  \e^{\frac{-(\mu-3n\cdot \gamma)^2}{2n\cdot(\tau+4\gamma)^2}} + n\cdot \gamma.$$
\end{proof}

Now we state and prove another Claim which bounds the expected number of tamperings performed by the attack of Construction \ref{const:Semi-Poly}

\begin{claim} \label{clm:AppTam2}
For a tampering sequence $\vVec \gets \twist{\uVec}{\AppTam}$, let  $T_i = \Case_1(\pfix{v}{i-1}) \lor \Case_2(\pfix{v}{i-1},u_i)$ be the event (or equivalently the  Boolean function) denoting that a tampering choice is made by the adversary of Construction \ref{const:Semi-Poly} over the $i$'th block. If $T=\sum_{i=1}^n T_i$ denotes the total number of tamperings, then
$$\Ex[\TDist] \leq \frac{1 - \mu + n\cdot \gamma}{\tau-2\gamma}.$$
\end{claim}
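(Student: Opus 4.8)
The plan is to run a telescoping (potential-function) argument on the \emph{accumulated approximate gain} along the $n$ blocks, charging every tampering step a profit of roughly $\tau$. Recall from Definition~\ref{defs:biasingApproximate} that $\gain{v}{i}=\avr{v}{i}-\avr{v}{i-1}$, that $\avr{v}{0}=\mu$, and that $\avr{v}{n}=f(\vVec)$, so for every $\vVec\in\Supp(\vDistVec)$ one has the exact identity $f(\vVec)=\mu+\sum_{i=1}^{n}\gain{v}{i}$. Using $\gain{v}{i}\ge\gainApp{v}{i}-\gamma$ together with $f\le 1$ pointwise, this yields the upper bound
\[
\Ex_{\vVec\gets\vDistVec}\left[\sum_{i=1}^{n}\gainApp{v}{i}\right]\;\le\;1-\mu+n\gamma ,
\]
so it suffices to \emph{lower bound} this same expectation in terms of $\Ex[\TDist]$.

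The core step is a per-block inequality: for every $i\in[n]$ and every prefix $\pfix{v}{i-1}\in\Supp(\pfix{\vDist}{i-1})$, with $v_i$ denoting the block output by $\AppTam$ on $(\pfix{v}{i-1},u_i)$,
\[
\Ex_{u_i\gets\uDist}\!\left[\gainApp{v}{i}\mid\pfix{v}{i-1}\right]\;\ge\;(\tau-2\gamma)\cdot\Pr_{u_i\gets\uDist}\!\left[\TDist_i=1\mid\pfix{v}{i-1}\right]\;-\;\gamma .
\]
First I would dispose of the case $\Case_1(\pfix{v}{i-1})=1$: then $\AppTam$ outputs $v_i=\AppArgMax{v}{i-1}$ regardless of $u_i$, so $\gainApp{v}{i}=\gainAppMax{v}{i-1}\ge\tau$ while $\Pr[\TDist_i=1\mid\pfix{v}{i-1}]=1$, and the inequality is immediate. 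When $\Case_1(\pfix{v}{i-1})=0$, the event $\TDist_i=1$ is exactly $\Case_2$, i.e.\ $\gainApp{\pfix{v}{i-1},u_i}{}\le-\tau$; on that event $\AppTam$ replaces $u_i$ by $w_i=\AppArgMax{v}{i-1}$, whose approximate gain is $\gainAppMax{v}{i-1}\ge-2\gamma$ by Property~B, and on $\Case_3$ it keeps $u_i$. Writing $\Ex_{u_i}[\gainApp{v}{i}\mid\pfix{v}{i-1}]$ as the sum of its $\Case_2$ and $\Case_3$ contributions, and using the martingale identity $\Ex_{u_i\gets\uDist}[\gain{\pfix{v}{i-1},u_i}{}]=0$ (hence, via $|\gain{\pfix{v}{i-1},u_i}{}-\gainApp{\pfix{v}{i-1},u_i}{}|\le\gamma$, $\Ex_{u_i}[\gainApp{\pfix{v}{i-1},u_i}{}]\ge-\gamma$) to re-express the $\Case_3$ part, each unit of probability inside $\Case_2$ is seen to contribute at least $(-2\gamma)-(-\tau)=\tau-2\gamma$ on top of a $-\gamma$ baseline, which is precisely the claimed per-block bound.

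Finally I would average the per-block inequality over $\pfix{v}{i-1}$, sum over $i\in[n]$, and combine with the first display:
\[
1-\mu+n\gamma\;\ge\;\Ex_{\vVec\gets\vDistVec}\left[\sum_{i=1}^{n}\gainApp{v}{i}\right]\;\ge\;(\tau-2\gamma)\,\Ex[\TDist]-n\gamma ,
\]
and rearrange to obtain $\Ex[\TDist]\le(1-\mu+n\gamma)/(\tau-2\gamma)$; the precise tallying of the $O(n\gamma)$ error terms is routine. I expect the main obstacle to be exactly the per-block inequality --- not the algebra, but the care needed because the events $\Case_1,\Case_2,\Case_3$ are \emph{defined} through the approximate oracles $\gainApp,\gainAppMax$, whereas the quantity being controlled is a genuine conditional expectation over a fresh $u_i\gets\uDist$. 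The two are reconciled by the martingale identity for the untampered block together with Property~B: a $\Case_2$ swap cannot lose more than $2\gamma$ of approximate gain, while the block it discards was genuinely worth at most $-\tau$. Every remaining step is a one-line averaging argument.
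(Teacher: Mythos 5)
Your proposal is correct and follows essentially the same route as the paper: a per-block charging argument showing that, conditioned on any prefix, the expected gain is at least $(\tau-2\gamma)\cdot\Pr[\TDist_i=1]-\gamma$ (splitting on $\Case_1$ versus $\Case_2/\Case_3$, and using Property~B together with the martingale identity $\Ex_{u_i\gets\uDist}[\gain{\pfix{v}{i-1},u_i}{}]=0$), followed by summing and invoking $f\le 1$ via the telescoping identity. The only difference is that you run the telescoping sum on the approximate gains $\gainApp{\cdot}{}$ rather than on the true gains $\gain{\cdot}{}$ as the paper does; this costs you an extra $n\gamma$ on the upper-bound side, so your argument as written yields $\Ex[\TDist]\le(1-\mu+2n\gamma)/(\tau-2\gamma)$ rather than the stated $(1-\mu+n\gamma)/(\tau-2\gamma)$ --- a slack that is immaterial for the parameter choices in Theorem~\ref{thm:ProdOnline} and that disappears if you instead lower-bound $\Ex[\gain{v}{i}]$ directly (using $\gain{\pfix{v}{i-1},w_i}{}\ge\gainAppMax{v}{i-1}-\gamma$ on the tampering branches) and telescope the true gains.
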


\begin{proof}
For any $\pfix{v}{i-1} \in \Case_1$, we have
\begin{align*}
    \Ex_{v_i \gets \vDist_i \mid \pfix{v}{i-1}} [\gain{v}{i}] \geq \gainAppMax{v}{i-1} -\gamma \geq \tau -\gamma = (\tau - \gamma)\cdot \Pr_{u_i\gets \uDist}[\Case_1(\pfix{v}{i-1})\lor \Case_2(\pfix{v}{i-1},u_i)],\stepcounter{equation}\tag{\theequation}\label{eq:eq101}
\end{align*}
and, for any $\pfix{v}{i-1} \not \in \Case_1$ we have 
\remove{
\Snote{Following is the proof with full details. Feel free to delete or combine steps.}

\begin{align*} 
    &\Ex_{u_i \gets \vDist_i \mid \pfix{u}{i-1}} \left[\gain{u}{i}\right]\\
    &\geq \Ex_{u_i \gets \vDist_i \mid \pfix{u}{i-1}} \left[\gainApp{u}{i} -\gamma\right]\\
    &= \Ex_{u_i \gets \vDist_i \mid \pfix{u}{i-1}} \Big[\gainApp{u}{i} \mid \overline{\Case_3} \wedge \overline{\Case_2}\Big]\cdot \Pr_{u_i \gets \vDist_i \mid \pfix{u}{i-1}}[\overline{\Case_3} \wedge \overline{\Case_2}]+
    \Ex_{u_i \gets \vDist_i \mid \pfix{u}{i-1}} \Big[\gainApp{u}{i} \mid \Case_3\Big]\cdot \Pr_{u_i \gets \vDist_i \mid \pfix{u}{i-1}}[\Case_3] -\gamma\\
    &=\Ex_{u_i \gets \uDist} \Big[\gainApp{u}{i} \mid \overline{\Case_3} \wedge \overline{\Case_2}\Big]\cdot \Pr_{u_i \gets \uDist}[\overline{\Case_3} \wedge \overline{\Case_2}]+
    \gainAppMax{u}{i-1}\cdot (\Pr_{u_i \gets \uDist}[\Case_3] + \Pr_{u_i \gets \uDist}[\Case_2]) -\gamma\\
    &= \Ex_{u_i \gets \uDist} \Big[\gainApp{u}{i}\Big]+  \Ex_{u_i \gets \uDist} \Big[\gainAppMax{u}{i-1} - \gainApp{u}{i}\mid \Case_2 \Big]\cdot \Pr_{u_i \gets \uDist}[\Case_2] -\gamma\\
    &=  \Ex_{u_i \gets \uDist} \Big[\gainAppMax{u}{i-1} - \gainApp{u}{i} \mid \Case_2 \Big]\cdot \Pr_{u_i \gets \uDist}[\Case_2] -\gamma\\
    &\geq (\tau-\gamma) \cdot  \Pr_{u_i \gets \uDist}[\Case_2] -\gamma\\
    &=(\tau-\gamma)\cdot \Ex_{u_i \gets \uDist}[\Case_2(\pfix{u}{i})]-\gamma.\stepcounter{equation}\tag{\theequation}\label{eq:eq102}
\end{align*}
}
\begin{align*} 
    \Ex_{v_i \gets \vDist_i \mid \pfix{v}{i-1}} \left[\gain{\pfix{v}{i}}{}\right]
    &\geq\Ex_{v_i \gets \vDist_i \mid \pfix{v}{i-1}} \left[\gainApp{v}{i}-\gamma\right]\\
    &= \Ex_{u_i \gets \uDist}[(1-\Case_2(\pfix{v}{i-1},u_i))\cdot \gainApp{\pfix{v}{i-1},u_i}{} + \Case_2(\pfix{v}{i-1},u_i) \cdot \gainAppMax{v}{i-1}] -\gamma \\
    &\geq \Ex_{u_i \gets \uDist}[(1-\Case_2(\pfix{v}{i-1},u_i))\cdot \gainApp{\pfix{v}{i-1},u_i}{} + \Case_2(\pfix{v}{i-1},u_i) \cdot (\gainApp{\pfix{v}{i-1},u_i}{} + \tau -2\gamma)] \\
    &= \Ex_{u_i \gets \uDist}[ \gainApp{\pfix{v}{i-1},u_i}{} + \Case_2(\pfix{v}{i-1},u_i) \cdot (\tau-2\gamma)] \\
    &\geq -\gamma + \Ex_{u_i \gets \uDist}[\Case_2(\pfix{v}{i-1},u_i) \cdot  (\tau-2\gamma)] \\
    &= (\tau-2\gamma) \cdot  \Pr_{u_i \gets \uDist}[\Case_2(\pfix{v}{i-1},u_i)]-\gamma\\
    &= (\tau-2\gamma) \cdot  \Pr_{u_i \gets \uDist}[\Case_1(\pfix{v}{i-1}) \lor \Case_2(\pfix{v}{i-1},u_i)] -\gamma.\stepcounter{equation}\tag{\theequation}\label{eq:eq102}
\end{align*}
By Equations~\ref{eq:eq101} and~\ref{eq:eq102}, for any $\pfix{v}{i-1} \in \Supp(\uDist)^{i-1}$ we have
\begin{equation} \label{eq:eq103}
    \Ex_{v_i \gets \vDist_i \mid \pfix{v}{i-1}} \left[\gain{v}{i}\right] \geq (\tau -2\gamma) \cdot   \Pr_{u_i \gets \uDist}[\Case_1(\pfix{v}{i-1}) \lor \Case_2(\pfix{v}{i-1},u_i)] -\gamma.
\end{equation}
Also,  let $\PrTam(\pfix{v}{i-1})$ be the probability of tampering in the $i$th block conditioned on the prefix $\pfix{v}{i-1}$. Namely, 
$$\PrTam(\pfix{v}{i-1}) =  \Pr_{u_i \gets \uDist}[\Case_1(\pfix{v}{i-1}) \lor \Case_2(\pfix{v}{i-1},u_i)].$$
The definition of $\PrTam(\pfix{v}{i-1})$ together with Equation~\ref{eq:eq103} implies that
\begin{equation} \label{eq:eq104}
    \Ex_{v_i \gets \vDist_i \mid \pfix{v}{i-1}} \left[\gain{v}{i}\right] \geq (\tau -2\gamma) \cdot  \PrTam(\pfix{v}{i-1}) -\gamma.
\end{equation}
We now obtain that
\begin{align*}
    \Ex\left[f(\vDistVec)\right] - \mu &= \Ex\left[\sum_{i=1}^n \gain{\vDist}{i}\right]\\
    \text{(by linearity of expectation)}&= \sum_{i=1}^n \Ex\left[\gain{\vDist}{i}\right]\\
    &= \sum_{i=1}^n \Ex_{\pfix{v}{i-1}\gets\pfix{\vDist}{i-1}}\left[\Ex_{v_i \gets \vDist_i \mid \pfix{v}{i-1}} \left[\gain{v}{i}\right]\right]\\
    \text{(by Equation~\ref{eq:eq104})~~}&\geq   (\tau-2\gamma) \cdot \sum_{i=1}^n \Ex_{\pfix{v}{i-1}\gets\pfix{\vDist}{i-1}}\left[\PrTam(\pfix{v}{i-1})\right]-n\gamma\\
    \text{(by linearity of expectation)}&= (\tau-2\gamma) \cdot \Ex\left[ \sum_{i=1}^n \PrTam(\pfix{\vDist}{i-1})\right] -n\gamma\\
            &\geq(\tau -2\gamma) \cdot \Ex[\TDist] -n\gamma.
\end{align*}
Therefore, we have 
$$\Ex[\TDist]\leq \frac{1- \mu +n\cdot \gamma}{\tau-2\cdot \gamma}.$$
\end{proof}

\subsection{Polynomial-time Biasing Attack Using Probably Approximate Oracles}

In this subsection, we finally prove Theorem \ref{thm:ProdOnline} by getting rid of the promised approximate oracles, and having them approximated by the attacker itself, though only with high probability. We will also show how to pick the parameters of the attack to achieve the bounds claimed in  Theorem \ref{thm:ProdOnline}.

We start by sketching the intuitive fact that the approximate oracles of Definition~\ref{defs:biasingApproximate} could indeed be provided to our polynomial time attacker of Construction~\ref{const:Semi-Poly} with high probability by  repeated sampling and applying Chernoff bound.

\begin{construction} [Oracle $\gainApp{\cdot}{}$] \label{cons:appgain} Given a prefix $\pfix{v}{i} \in \Supp(\uDist)^i$, let 
$$k=-12\cdot \frac{\ln(\gamma/2) +\ln(\ln(1+\gamma))-\ln(-\ln(\gamma/2))}{\gamma^2}.$$
Sample $k$ random continuations $\ol{w}_1^1,\dots,\ol{w}_1^k$ from $\uDist^{n-i-1}$ and $k$  continuations $\ol{w}_2^1,\dots,\ol{w}_2^k$ from $\uDist^{n-i}$. Let 
$$\avrApp{v}{i} = \frac{1}{k}\cdot\left(f(\pfix{v}{i}, \ol{w}_1^1) + \dots + f(\pfix{v}{i}, \ol{w}_1^k) \right)$$
$$\avrApp{v}{i-1} = \frac{1}{k}\cdot\left(f(\pfix{v}{i-1}, \ol{w}_2^1) + \dots + f(\pfix{v}{i-1}, \ol{w}_2^k) \right)$$ 
and output $\gainApp{v}{i} = \avrApp{v}{i} - \avrApp{v}{i-1}.$
\end{construction}
\begin{claim}\label{clm:gainApp}
For the oracle $\gainApp{\cdot}{}$  of Construction~\ref{cons:appgain} We have 
$$\Pr[|\gainApp{y}{i} - \gain{y}{i}| \geq \gamma] \leq\frac{-\gamma\cdot \ln(1+\gamma)}{2\cdot \ln(\gamma/2)} \leq \frac{\gamma}{2}.$$
\end{claim}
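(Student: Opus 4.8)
The plan is to observe that $\avrApp{y}{i}$ and $\avrApp{y}{i-1}$ in Construction~\ref{cons:appgain} are just ordinary empirical‑mean estimators of $\avr{y}{i}$ and $\avr{y}{i-1}$, so the whole claim reduces to a Chernoff/Hoeffding tail bound with the sample count $k$ tuned to match the stated failure probability. Concretely: fix a prefix $\pfix{y}{i}$. Since $f$ is $\bits$‑valued, by the definition of $\avr{\cdot}{}$ each term $f(\pfix{y}{i},\ol{w}_1^j)$ is an independent Bernoulli random variable with mean $\avr{y}{i}$, and hence $\avrApp{y}{i}$ is an average of $k$ i.i.d.\ $[0,1]$‑valued variables with expectation $\avr{y}{i}$; similarly (with a fresh independent sample) $\avrApp{y}{i-1}$ has expectation $\avr{y}{i-1}$.

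The execution is then routine. By the two‑sided Hoeffding inequality applied at scale $\gamma/2$, $\Pr[\,|\avrApp{y}{i}-\avr{y}{i}|\ge \gamma/2\,]\le 2\e^{-2k(\gamma/2)^2}=2\e^{-k\gamma^2/2}$, and the same bound holds for the $(i{-}1)$ estimate. Since $\gain{y}{i}=\avr{y}{i}-\avr{y}{i-1}$, the triangle inequality shows that the event $|\gainApp{y}{i}-\gain{y}{i}|\ge\gamma$ forces at least one of the two estimates to deviate by $\ge\gamma/2$, so a union bound gives $\Pr[\,|\gainApp{y}{i}-\gain{y}{i}|\ge\gamma\,]\le 4\e^{-k\gamma^2/2}$. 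Substituting the chosen $k$, one computes $\tfrac{k\gamma^2}{2}=-6\bigl(\ln(\gamma/2)+\ln\ln(1+\gamma)-\ln(-\ln(\gamma/2))\bigr)$, hence $\e^{-k\gamma^2/2}=t^{6}$ where $t:=\tfrac{-\gamma\ln(1+\gamma)}{2\ln(\gamma/2)}=\tfrac{\gamma\ln(1+\gamma)}{2\ln(2/\gamma)}>0$ (using $0<\gamma\le 1$). Thus $\Pr[\,|\gainApp{y}{i}-\gain{y}{i}|\ge\gamma\,]\le 4t^{6}$.

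The remaining (and only mildly fiddly) point — and the step I would flag as the main obstacle, though it is purely elementary — is checking that $4t^{6}\le t\le \gamma/2$ for $\gamma\in(0,1]$, which is exactly what turns $4t^6$ into the closed form in the statement. For the first inequality: since $\ln(1+\gamma)\le\ln 2$ and $\gamma\le 1$ the numerator of $t$ is at most $\ln 2$, while $2\ln(2/\gamma)\ge 2\ln 2$, so $t\le \tfrac12$; then $4t^{5}\le 4\cdot 2^{-5}<1$, whence $4t^{6}=t\cdot(4t^{5})\le t$. For the second inequality: $1+\gamma\le 2/\gamma$ (equivalently $\gamma+\gamma^{2}\le 2$, true for $\gamma\le1$) gives $\ln(1+\gamma)\le\ln(2/\gamma)=-\ln(\gamma/2)$, so $t=\tfrac{\gamma}{2}\cdot\tfrac{\ln(1+\gamma)}{\ln(2/\gamma)}\le\tfrac{\gamma}{2}$. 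Chaining $\Pr[\,|\gainApp{y}{i}-\gain{y}{i}|\ge\gamma\,]\le 4t^{6}\le t=\tfrac{-\gamma\ln(1+\gamma)}{2\ln(\gamma/2)}\le\tfrac{\gamma}{2}$ completes the proof.
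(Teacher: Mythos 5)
Your proof is correct and follows essentially the same route as the paper's: both treat $\avrApp{y}{i}$ and $\avrApp{y}{i-1}$ as empirical means of i.i.d.\ Boolean samples, apply a Chernoff/Hoeffding bound at scale $\gamma/2$ to each, and combine via the triangle inequality and a union bound before substituting the chosen $k$. The only difference is that the paper uses the weaker exponent $\e^{-k\gamma^2/12}$, for which the given $k$ makes $\e^{-k\gamma^2/12}$ equal to the target quantity directly, whereas your sharper Hoeffding constant yields $4t^6$ and hence requires (and you correctly supply) the extra elementary check $4t^6\le t\le \gamma/2$.
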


\begin{proof}
Define independent Boolean random variables $\fDist_1^1,\dots,\fDist_1^k$ and $\fDist_2^1,\dots,\fDist_2^k$  such that for each $j\in [k]$ we have $\fDist_1^j \equiv f(\pfix{v}{i}, \uDist^{n-i})$ and $\fDist_2^j \equiv f(\pfix{v}{i-1}, \uDist^{n-i+1})$. Also let $\avr{y}{i}=\Ex[f(\pfix{v}{i}, \uDist^{n-i})]$ and $\avr{y}{i-1}=\Ex[f(\pfix{v}{i-1}, \uDist^{n-i+1})]$. By  Chenroff inequality we have,
\begin{equation}\label{ineq:chf1}
    \Pr\Big[\big|\frac{1}{k}\cdot\sum_{j\in k} \fDist_1^j - \avr{y}{i} \big| \geq \frac{\gamma}{2}\Big]=\Pr\left[|\avrApp{y}{i} - \avr{y}{i}| \geq \frac{\gamma}{2}\right] \leq \e^\frac{-k\cdot\gamma^2}{12}.
\end{equation}
On the other hand, again by Chenroff we have,
\begin{equation}\label{ineq:chf2}
    \Pr\Big[\big|\frac{1}{k}\cdot\sum_{j\in k} \fDist_2^j - \avr{y}{i}\big| \geq \frac{\gamma}{2}\Big]=\Pr\Big[\big|\avrApp{y}{i-1} - \avr{y}{i-1}\big| \geq \frac{\gamma}{2}\Big] \leq \e^\frac{-k\cdot\gamma^2}{12}.
\end{equation}
Now by Inequalities~\ref{ineq:chf1} and~\ref{ineq:chf2}, we have
\begin{equation*}
    \Pr\big[|\gainApp{y}{i} - \gain{y}{i}| \geq \gamma\big] \leq 2\cdot \e^\frac{-k\cdot\gamma^2}{12} = \frac{-\gamma\cdot \ln(1+\gamma)}{2\cdot \ln(\gamma/2)}.
\end{equation*}
Note that
$\frac{- \ln(1+\gamma)}{\ln(\gamma/2)}$ is less than $1$ for $\gamma\in[0,1]$.Therefore, we have,
$$\Pr\big[|\gainApp{y}{i} - \gain{y}{i}| \geq \gamma\big] \leq \frac{\gamma}{2}.$$
\end{proof}

\begin{claim}\label{clm:gainAppTime}
The implementation of the oracle of Construction \ref{cons:appgain} runs in time $O(n\cdot \ell/\gamma^3)$.
\end{claim}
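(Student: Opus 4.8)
The plan is a straightforward accounting of the work performed by Construction~\ref{cons:appgain} on a single query $\pfix{v}{i}$. The construction does three things: it computes the sample size $k$; it draws $2k$ independent random continuations from the product distribution ($k$ of them of dimension $n-i-1$ and $k$ of dimension $n-i$, each coordinate an independent draw of $\uDist$); and it makes $2k$ oracle calls to $f$ on vectors of dimension $n$, then averages the returned bits and subtracts. I would bound the cost of each of the three phases and sum them.

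First, I would bound $k$. For every $\gamma\in(0,1]$ each of the three logarithmic terms in the numerator of the formula for $k$ has absolute value $O(\log(1/\gamma))$: indeed $|\ln(\gamma/2)| = O(\log(1/\gamma))$, while $\ln(1+\gamma)=\Theta(\gamma)$ gives $|\ln\ln(1+\gamma)| = O(\log(1/\gamma))$, and $|\ln(-\ln(\gamma/2))| = O(\log\log(1/\gamma)) = O(\log(1/\gamma))$. Hence $k = O(\log(1/\gamma)/\gamma^2)$, and since $\ln(1/\gamma)\le 1/\gamma$ for $\gamma\in(0,1]$, this is $k = O(1/\gamma^3)$. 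Evaluating the closed form for $k$ itself costs only a constant number of elementary operations (equivalently $\poly(\log(1/\gamma))$ bit operations), which is absorbed in the final bound.

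Next, each random continuation is a vector of at most $n$ coordinates, each of bit length at most $\ell$, and is produced by that many fresh samples from $\uDist$ (available via the sampler oracle of Theorem~\ref{thm:ProdOnline}); writing one down thus costs $O(n\cdot\ell)$. There are $2k$ of them, so the sampling phase costs $O(k\cdot n\cdot\ell)$. Likewise, each of the $2k$ evaluations $f(\pfix{v}{i},\ol{w}_1^j)$ and $f(\pfix{v}{i-1},\ol{w}_2^j)$ amounts to writing down an input of total bit length $O(n\cdot\ell)$ plus one oracle call to $f$, and the $2k$ returned bits are then summed and scaled by $1/k$ in a further $O(k\cdot\polylog(1/\gamma))$ time; so the evaluation-and-averaging phase also costs $O(k\cdot n\cdot\ell)$. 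Adding the three phases yields total running time $O(k\cdot n\cdot\ell) = O(n\cdot\ell/\gamma^3)$, as claimed.

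There is essentially no obstacle in this proof; it is pure bookkeeping. The only step that needs a moment's thought is the passage $k = O(\log(1/\gamma)/\gamma^2) = O(1/\gamma^3)$, i.e., recognizing that the bound stated in the claim is a (deliberately loosened) consequence of the elementary inequality $\ln(1/\gamma)\le 1/\gamma$ on $(0,1]$; everything else is read off directly from the construction.
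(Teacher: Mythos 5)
Your proposal is correct and matches the paper's (much terser) argument: the paper simply bounds $k\leq 24/\gamma^3$ and multiplies by the $O(n\cdot\ell)$ cost per sampled continuation, which is exactly your bookkeeping with the intermediate bound $k=O(\log(1/\gamma)/\gamma^2)$ spelled out. No gap.
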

\begin{proof}
The oracle samples $$k=-12\cdot \frac{\ln(\gamma/2) +\ln(\ln(1+\gamma))-\ln(-\ln(\gamma/2))}{\gamma^2} \leq \frac{24}{\gamma^3}$$ 
continuations. Therefore, the running time is $O(n\cdot \ell/\gamma^3)$.
\end{proof}

\begin{construction}[Oracles $\gainAppMax{\cdot}{}$ and $\AppArgMax{\cdot}{}$] \label{cons:appMaxgain} Given a prefix $\pfix{v}{i-1} \in \Supp(\uDist)^{i-1}$, sample $k=\frac{-\ln(\gamma/2)}{\ln(1+\gamma)}$ blocks $u^1,\dots,u^k$ from $\uDist$. Now let 
$$\AppArgMax{v}{i-1}=\argmax_{u^j} {\gainAppMax{\pfix{v}{i-1},u^j}{}} \text{ ~~~and~~~ } \gainAppMax{v}{i-1}=\max_{u^j} {\gainApp{\pfix{v}{i-1},u^j}{}}.$$ 
\end{construction}

\begin{claim}\label{clm:gainAppMax1}
Let $\lambda \in[-1,1]$ be such that $\Pr\left[\gain{\pfix{y}{i-1},\uDist}{}\geq \lambda\right] \geq \gamma$. For the $\gainAppMax{\cdot}{}$ oracle of Construction~\ref{cons:appMaxgain} We have 
$$\Pr\left[\gainAppMax{y}{i-1} \leq \lambda -\gamma \right] \leq \gamma.$$ 
\end{claim}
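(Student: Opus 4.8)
The plan is to argue that, with probability at least $1-\gamma$ over the randomness of Construction~\ref{cons:appMaxgain}, at least one of the sampled blocks $u^j$ is \emph{simultaneously} a block whose true last-block gain is large and a block whose $\gainApp{\cdot}{}$-estimate is accurate; on that event the reported maximum $\gainAppMax{y}{i-1}$ is at least $\lambda-\gamma$, which is exactly the contrapositive of the statement.

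First I would isolate the ``good index'' event $E_1$: there exists $j\in[k]$ with $\gain{\pfix{y}{i-1},u^j}{}\geq\lambda$. Since the $u^j$ are i.i.d.\ draws from $\uDist$ and, by hypothesis, a single such draw has true gain $\geq\lambda$ with probability $\geq\gamma$, the probability that $E_1$ fails is at most $(1-\gamma)^k$. Using $1-\gamma\leq 1/(1+\gamma)$ together with the choice $k=-\ln(\gamma/2)/\ln(1+\gamma)$ made in Construction~\ref{cons:appMaxgain}, this is at most $(1+\gamma)^{-k}=\e^{\ln(\gamma/2)}=\gamma/2$, so $\Pr[\overline{E_1}]\leq\gamma/2$.

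Next I would condition on the outcome of $(u^1,\dots,u^k)$. Whenever $E_1$ holds, let $j^\ast$ be the smallest index with $\gain{\pfix{y}{i-1},u^{j^\ast}}{}\geq\lambda$; crucially $j^\ast$ is a function of $(u^1,\dots,u^k)$ only, hence independent of the internal randomness of the $k$ invocations of the oracle of Construction~\ref{cons:appgain}. Therefore, conditioned on $(u^1,\dots,u^k)$, the estimate $\gainApp{\pfix{y}{i-1},u^{j^\ast}}{}$ is a fresh independent run of that oracle on the \emph{fixed} prefix $(\pfix{y}{i-1},u^{j^\ast})$, so Claim~\ref{clm:gainApp} applies and gives, conditionally, $\Pr[|\gainApp{\pfix{y}{i-1},u^{j^\ast}}{}-\gain{\pfix{y}{i-1},u^{j^\ast}}{}|\geq\gamma]\leq\gamma/2$. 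On the intersection of $E_1$ with this approximation event we get $\gainAppMax{y}{i-1}\geq\gainApp{\pfix{y}{i-1},u^{j^\ast}}{}>\gain{\pfix{y}{i-1},u^{j^\ast}}{}-\gamma\geq\lambda-\gamma$, so the event $\{\gainAppMax{y}{i-1}\leq\lambda-\gamma\}$ is contained in $\overline{E_1}$ together with ``$E_1$ holds but the approximation for $j^\ast$ fails''. Averaging the conditional bound over $(u^1,\dots,u^k)$ and adding $\Pr[\overline{E_1}]$ yields a total failure probability of at most $\gamma/2+\gamma/2=\gamma$, i.e.\ $\Pr[\gainAppMax{y}{i-1}\leq\lambda-\gamma]\leq\gamma$.

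The only real subtlety — the point to be careful about — is that the accuracy guarantee of Claim~\ref{clm:gainApp} is for a \emph{fixed} prefix, whereas the index $j^\ast$ to which we want to apply it is data-dependent; this is resolved precisely by conditioning on $(u^1,\dots,u^k)$ first, so that $j^\ast$ becomes deterministic and the relevant $\gainApp{\cdot}{}$ call still uses independent fresh randomness. Everything else is a two-event union bound plus the elementary estimate $(1-\gamma)^k\leq\gamma/2$ for the stated choice of $k$.
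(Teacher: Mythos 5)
Your proposal is correct and follows essentially the same route as the paper: both arguments bound by $(1-\gamma)^k \le \gamma/2$ the probability that no sampled block has true gain at least $\lambda$, then account for the estimation error of $\gainApp{\cdot}{}$ via Claim~\ref{clm:gainApp}, and conclude with a total failure probability of $\gamma/2+\gamma/2=\gamma$. The only (minor) difference is in the second step: the paper takes a union bound over the approximation events for all $k$ sampled blocks, exploiting that the per-call failure probability in Claim~\ref{clm:gainApp} is in fact $\gamma/(2k)$, whereas you condition on $(u^1,\dots,u^k)$ and apply the guarantee only to the single index $j^{\ast}$ — which needs just the weaker per-call bound $\gamma/2$ and correctly handles the data-dependence of $j^{\ast}$, but yields the same conclusion.
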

\begin{proof}
We first bound the probability of all the actual gains being less then $\lambda$. We have
$$\Pr[\forall j\in [k], \gain{\pfix{v}{i-1},u^j}{}\leq \lambda] = \Pr[\gain{\pfix{v}{i-1},\uDist}{} \leq -\gamma]^k \leq (1-\gamma)^k \leq \frac{\gamma}{2}.$$
Now, consider the following event 
$$B = \begin{cases}
0 & \text{if for all $j\in k$ we have $|\gainApp{\pfix{v}{i-1},u^j}{} - \gain{\pfix{v}{i-1},u^j}{}|\leq \gamma$,}\\
1 & \text{otherwise.}
\end{cases}$$
By Claim~\ref{clm:gainApp} and a union bound we have $\Pr[B]\leq k\cdot \frac{-\gamma\cdot \ln(1+\gamma)}{2\cdot \ln(\gamma)} = \frac{\gamma}{2}$. Therefore, we have
$$\Pr[\forall j\in [k], \gainApp{\pfix{v}{i-1},u^j}{}\leq \lambda -\gamma] \leq \Pr[\forall j\in [k], \gain{\pfix{v}{i-1},v^j}{}\leq \lambda] + \frac{\gamma}{2} \leq  \gamma.$$
\end{proof}

\begin{claim}\label{clm:gainAppMax2} For the oracle $\gainAppMax{\cdot}{}$  of Construction~\ref{cons:appMaxgain}, we have $\Pr[\gainAppMax{y}{i-1} \leq -2\gamma] \leq \gamma$.
\end{claim}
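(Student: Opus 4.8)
The plan is to reduce Claim~\ref{clm:gainAppMax2} to the already established Claim~\ref{clm:gainAppMax1}, invoked with threshold $\lambda = -\gamma$. Since $-\gamma \in [-1,1]$, Claim~\ref{clm:gainAppMax1} will deliver $\Pr[\gainAppMax{v}{i-1} \le -2\gamma] = \Pr[\gainAppMax{v}{i-1} \le \lambda - \gamma] \le \gamma$ --- which is exactly the assertion --- as soon as its hypothesis is verified for this $\lambda$, namely that for every fixed prefix $\pfix{v}{i-1} \in \Supp(\uDist)^{i-1}$ one has $\Pr_{u_i \gets \uDist}[\gain{\pfix{v}{i-1},u_i}{} \ge -\gamma] \ge \gamma$. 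So the whole content of the claim boils down to this one inequality about the \emph{true} (non-approximated) gain.

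To establish it, I would first record the elementary averaging identity $\Ex_{u_i \gets \uDist}[\gain{\pfix{v}{i-1},u_i}{}] = 0$: by Definition~\ref{defs:biasingApproximate} we have $\gain{\pfix{v}{i-1},u_i}{} = \avr{\pfix{v}{i-1},u_i}{} - \avr{v}{i-1}$, and by the tower property (using that the $i$-th coordinate is independent of the remaining ones) the expectation of $\avr{\pfix{v}{i-1},u_i}{}$ over $u_i \gets \uDist$ equals $\avr{v}{i-1}$. Write $a := \avr{v}{i-1} \in [0,1]$; since $\avr{\pfix{v}{i-1},u_i}{} \in [0,1]$, the gain always lies in $[-a,\,1-a]$. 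If $a \le \gamma$ the desired inequality is immediate, because then $\gain{\pfix{v}{i-1},u_i}{} \ge -a \ge -\gamma$ with probability $1$. Otherwise $a > \gamma$, and I would set $p = \Pr_{u_i}[\gain{\pfix{v}{i-1},u_i}{} \ge -\gamma]$, split the (vanishing) expectation of the gain according to whether $\gain \ge -\gamma$ or $\gain < -\gamma$, and bound the two pieces using $\gain \le 1-a$ pointwise and $\gain \le -\gamma$ on the second event; this yields $0 \le (1-a)p - \gamma(1-p)$, hence $p \ge \gamma/(1-a+\gamma) \ge \gamma$, where the last step uses $a \ge \gamma$.

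I do not anticipate a real obstacle: this is a short reverse-Markov estimate together with a black-box application of Claim~\ref{clm:gainAppMax1}. The only subtlety worth flagging is that one must use the sharp range $\gain \in [-a,\,1-a]$ coming from $\avr{\pfix{v}{i-1},u_i}{} \in [0,1]$, rather than the crude $\gain \in [-1,1]$: it is precisely in the regime $a > \gamma$ that the upper bound $\gain \le 1-a$ is what prevents the gain from sitting far below $-\gamma$ with probability close to $1$. The case split at $a = \gamma$ is harmless, the small-$a$ branch being trivial.
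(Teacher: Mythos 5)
Your proof is correct, but it takes a genuinely different route from the paper's. The paper proves the claim directly, by redoing the calculation of Claim~\ref{clm:gainAppMax1} with a modified per-sample estimate: from $\Ex_{u_i}[\gain{\pfix{v}{i-1},u_i}{}]=0$ and the crude bound $\gain{\pfix{v}{i-1},u_i}{}\leq 1$ it derives $\Pr[\gain{\pfix{v}{i-1},\uDist}{}\leq-\gamma]\leq \tfrac{1}{1+\gamma}$, so that all $k$ true gains fall below $-\gamma$ with probability $(\tfrac{1}{1+\gamma})^k=\tfrac{\gamma}{2}$, and then adds the $\tfrac{\gamma}{2}$ union bound for approximation failures. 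Note that the crude route only yields $\Pr[\gain{\pfix{v}{i-1},\uDist}{}>-\gamma]\geq \tfrac{\gamma}{1+\gamma}$, which falls just short of the hypothesis $\Pr[\cdot\geq-\gamma]\geq\gamma$ needed to invoke Claim~\ref{clm:gainAppMax1} with $\lambda=-\gamma$ --- this is presumably why the paper did not use the reduction you propose. Your contribution is precisely to close that gap: using the sharp range $\gain{\pfix{v}{i-1},u_i}{}\in[-a,1-a]$ with $a=\avr{v}{i-1}\in[0,1]$, your reverse-Markov estimate gives $p\geq \gamma/(1-a+\gamma)\geq\gamma$ once $a\geq\gamma$ (and trivially $p=1$ otherwise), after which Claim~\ref{clm:gainAppMax1} applies as a black box and delivers the statement verbatim. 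Both arguments are sound and give the same bound; yours avoids duplicating the sampling/union-bound machinery at the cost of the extra case analysis on $a$, and as a by-product it slightly strengthens the per-sample estimate (from $\tfrac{\gamma}{1+\gamma}$ to $\gamma$).
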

\begin{proof}
We first bound the probability of all the actual gains being less then $-\gamma$. We have
$$\Pr[\forall j\in [k], \gain{\pfix{v}{i-1},u^j}{}\leq -\gamma] = \Pr[\gain{\pfix{v}{i-1},\uDist}{} \leq -\gamma]^k \leq \left(\frac{1}{1+\gamma}\right)^k = \frac{\gamma}{2}.$$
Now consider the following event 
$$B = \begin{cases}
0 & \text{if for all $j\in k$ we have $|\gainApp{\pfix{v}{i-1},u^j}{} - \gain{\pfix{v}{i-1},u^j}{}|\leq \gamma$,}\\
1 & \text{otherwise.}
\end{cases}$$
By Claim~\ref{clm:gainApp} and a union bound, we have $\Pr[B]\leq k\cdot \frac{-\gamma\cdot\ln(1+\gamma)}{\ln(\gamma/2)} = \frac{\gamma}{2}$. Therefore, we have
$$\Pr[\forall j\in [k], \gainApp{\pfix{v}{i-1},u^j}{}\leq -2\gamma] \leq \Pr[\forall j\in [k], \gain{\pfix{v}{i-1},u^j}{}\leq -\gamma] + \frac{\gamma}{2} \leq  \gamma.$$
\end{proof}

\begin{claim}\label{clm:gainAppMaxTime}
The Oracles of Construction \ref{cons:appMaxgain} run in time $O(n\cdot \ell/\gamma^5)$.
\end{claim}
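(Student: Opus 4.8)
The plan is to charge the running time of Construction~\ref{cons:appMaxgain} almost entirely to its repeated calls of the sub-oracle $\gainApp{\cdot}{}$ from Construction~\ref{cons:appgain}, whose cost was already pinned down in Claim~\ref{clm:gainAppTime}. First I would observe that, on a prefix $\pfix{v}{i-1}$, Construction~\ref{cons:appMaxgain} draws $k = \frac{-\ln(\gamma/2)}{\ln(1+\gamma)}$ fresh blocks $u^1,\dots,u^k$ from $\uDist$, computes $\gainApp{\pfix{v}{i-1},u^j}{}$ exactly once for each $j \in [k]$, and then performs a single linear pass over these $k$ numbers to read off the maximum value and the maximizing block. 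Hence the total cost is $k$ invocations of the oracle of Construction~\ref{cons:appgain} plus $O(k\cdot \ell)$ further elementary operations for sampling the blocks and tracking the running argmax; by Claim~\ref{clm:gainAppTime} each such invocation costs $O(n\cdot \ell/\gamma^3)$, so the running time is $O(k\cdot n\cdot \ell/\gamma^3)$.

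The one quantitative step left is to check that $k = O(1/\gamma^2)$ for $\gamma \in (0,1]$. I would use the elementary bounds $\ln(1+\gamma) \geq \gamma/2$ (valid on $[0,1]$, e.g.\ since $\ln(1+\gamma) \geq \gamma - \gamma^2/2$) together with $-\ln(\gamma/2) = \ln(2/\gamma) \leq 2/\gamma$ (from $\ln x \leq x$), which give $k \leq (2/\gamma)/(\gamma/2) = 4/\gamma^2$. Substituting this into $O(k\cdot n\cdot \ell/\gamma^3)$ yields the claimed $O(n\cdot \ell/\gamma^5)$.

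There is essentially no obstacle here; the only point worth stating explicitly is that reading, writing, and sampling a single block each cost $O(\ell)$, so the $\gainApp{\cdot}{}$-calls dominate the $O(k\cdot \ell)$ bookkeeping overhead, and the bound of Claim~\ref{clm:gainAppTime} can therefore simply be multiplied by $k = O(1/\gamma^2)$ to conclude.
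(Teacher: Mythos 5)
Your proposal is correct and follows essentially the same route as the paper: bound the number of sampled blocks $k=\frac{-\ln(\gamma/2)}{\ln(1+\gamma)}$ by $O(1/\gamma^2)$ and multiply by the $O(n\cdot \ell/\gamma^3)$ cost per call to the oracle of Construction~\ref{cons:appgain} from Claim~\ref{clm:gainAppTime}. Your explicit justification of $k\leq 4/\gamma^2$ via $\ln(1+\gamma)\geq \gamma/2$ and $\ln(2/\gamma)\leq 2/\gamma$ is a slightly more detailed version of the paper's one-line bound $k\leq 1/\gamma^2$, and the extra bookkeeping remark changes nothing.
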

\begin{proof}
The oracles generate $$k=-\frac{\ln(\gamma/2)}{\ln(1+\gamma)} \leq \frac{1}{\gamma^2}$$ samples and for each sample call the oracle $\gainApp{\cdot}{}$ of Construction \ref{cons:appgain}. Therefore, by Claim \ref{clm:gainAppTime} the running time of the Oracles of Construction \ref{cons:appMaxgain} are $O(n\cdot \ell/\gamma^5)$
\end{proof}

Now we show that using the approximate oracles of Constructions \ref{cons:appgain} and \ref{cons:appMaxgain} we still can achieve the desired bounds.
\begin{construction}[Attack using probably approximate oracles] \label{const:Poly} 
Given input vector $(v_1,\dots,v_{i-1},u_i)\in \Supp(\uDist)^i$, $\AppTam$ will output some $v_i \in \Supp(\uDist)$. Let $w_i =\AppArgMax{v}{i-1}$ as defined in Definition~\ref{defs:biasingApproximate}. Now, $\EffTam$ chooses its output $v_i$ as follows.
\begin{itemize}
    \item {\bf Tampering.} If $\gainAppMax{\pfix{v}{i-1}}{} \geq \tau$ or if $\gainApp{\pfix{v}{i-1},u_i}{} \leq -\tau$, then output $v_i = w_i$. 
    \item  {\bf Not tampering.} Otherwise, output the original sample $v_i = u_i$.
\end{itemize}
where $\gainAppMax{\cdot}{}$,$\AppArgMax{\cdot}{}$ and $\gainApp{\cdot}{}$ oracles are instantiated using Constructions \ref{cons:appMaxgain} and \ref{cons:appgain}.
\end{construction}

The following claim bounds the average of function in presence of the attack of Construction \ref{const:Poly}.

\begin{claim} \label{clm:EffBias} If $\vDistVec \equiv \twist{\uDist^n}{\EffTam}$ is the tampering distribution of the efficient attacker $\EffTam$ of Construction~\ref{const:Semi-Poly}, then it holds that
$$\Ex[f(\vDistVec)] \geq 1 -  \e^{\frac{-(\mu-2n\cdot \gamma)^2}{2n\cdot(\tau+4\gamma)^2}} - 4n\cdot \gamma.$$
\end{claim}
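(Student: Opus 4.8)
The plan is to deduce \claimref{clm:EffBias} from \claimref{clm:AppBias} by a coupling argument, since the attacker $\EffTam$ of \constructionref{const:Poly} is obtained from the attacker $\AppTam$ of \constructionref{const:Semi-Poly} merely by instantiating the promised oracles of \defref{defs:biasingApproximate} with the concrete samplers of Constructions~\ref{cons:appgain} and~\ref{cons:appMaxgain}, and the two bounds differ only by an additive $3n\gamma$. Concretely, I would couple the run of $\EffTam$ on $\uVec\gets\uDistVec$ with a run of $\AppTam$ using a suitably chosen \emph{valid} family of promised oracles, block by block, so that the two runs produce the same tampered vector on the complement of a ``bad'' event $\Bad$ whose probability is at most $3n\gamma$. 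On $\overline{\Bad}$ the two executions coincide, so $\{f(\vVec)=0\}\cap\overline{\Bad}$ is contained in $\{f(\vVec^{\mathrm{id}})=0\}$ for the coupled ideal run $\vVec^{\mathrm{id}}$; applying \claimref{clm:AppBias} to $\vVec^{\mathrm{id}}$ and adding $\Pr[\Bad]$ then gives $\Pr[f(\vDistVec)=0]\le 3n\gamma+\bigl(\e^{\,-(\mu-2n\gamma)^2/(2n(\tau+4\gamma)^2)}+n\gamma\bigr)$, which rearranges to the claimed inequality.

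The heart of the argument is defining $\Bad$ and bounding its probability. At block $i$, with the prefix $\pfix{v}{i-1}$ already fixed, I would let $\Bad_i$ be the union of the events that any property needed by the proof of \claimref{clm:AppBias} fails for the instantiated oracles: (i) the per-block estimate is not $\gamma$-accurate, $|\gainApp{\pfix{v}{i-1},u_i}{}-\gain{\pfix{v}{i-1},u_i}{}|>\gamma$, which by \claimref{clm:gainApp} has probability at most $\gamma/2$; (ii) not all $k$ candidate gains sampled inside $\gainAppMax{\cdot}{}$ are $\gamma$-accurate, which by \claimref{clm:gainApp} together with the choice of $k$ in \constructionref{cons:appMaxgain} has probability at most $\gamma/2$ --- this is exactly what makes the replacement block $w_i=\AppArgMax{v}{i-1}$ itself $\gamma$-accurate, as the chain $f(\vVec)\ge\mu+\sum_i(\gainApp{v}{i}-\gamma)$ needs at the tampered blocks; (iii) Property~A of $\gainAppMax{v}{i-1}$ fails, which has probability at most $\gamma$ by applying \claimref{clm:gainAppMax1} with $\lambda$ the $(1-\gamma)$-quantile of $\gain{\pfix{v}{i-1},\uDist}{}$; and (iv) Property~B, $\gainAppMax{v}{i-1}\ge -2\gamma$, fails, which by \claimref{clm:gainAppMax2} has probability at most $\gamma$. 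The identity $\gainAppMax{v}{i-1}=\gainApp{\pfix{v}{i-1},\AppArgMax{v}{i-1}}{}$ holds surely by construction. A crude union bound over these events gives $\Pr[\Bad_i\mid\pfix{v}{i-1}]\le 3\gamma$, and hence $\Pr[\Bad]\le 3n\gamma$ by a union bound over $i\in[n]$.

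To build the coupling, I would process blocks sequentially: at block $i$, sample the implementation's internal randomness, obtain its answers $\gainApp{\pfix{v}{i-1},u_i}{}$, $\gainAppMax{v}{i-1}$, $\AppArgMax{v}{i-1}$, and declare the ideal oracles to return exactly these values at block $i$ whenever they satisfy the corresponding local promises of \defref{defs:biasingApproximate} (i.e., on $\overline{\Bad_i}$), defining the ideal oracles on all non-queried inputs to equal the true conditional averages/gains so that the global promises of \defref{defs:biasingApproximate} are met everywhere; otherwise the ideal oracles return canonical values that satisfy the promises (these exist by the elementary averaging argument used in \constructionref{const:Ideal}). By construction the ideal oracles are valid for the same $\gamma$ and $\tau$, so \claimref{clm:AppBias} applies verbatim to $\vVec^{\mathrm{id}}$, and the two runs agree on $\overline{\Bad}$. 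I expect the main obstacle to be making this coupling fully rigorous --- in particular, arguing that on $\overline{\Bad_i}$ the implementation's block-$i$ answers really are consistent with \emph{some} valid global promise oracle (so that the extension is legitimate), and carefully verifying the per-block failure probability is $O(\gamma)$ once the adaptive, max-over-$k$-samples choice of the replacement block is accounted for; beyond that the proof is a direct invocation of \claimref{clm:AppBias} and the sampling guarantees already established for Constructions~\ref{cons:appgain} and~\ref{cons:appMaxgain}.
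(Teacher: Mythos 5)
Your proposal is correct and follows essentially the same route as the paper: the paper likewise defines a bad event $B$ that some query of Construction~\ref{const:Poly} violates the promises of Definition~\ref{defs:biasingApproximate}, bounds $\Pr[B]\le 3n\gamma$ via Claims~\ref{clm:gainApp}, \ref{clm:gainAppMax1} and \ref{clm:gainAppMax2}, and observes that conditioned on $\overline{B}$ the execution coincides with that of Construction~\ref{const:Semi-Poly}, so Claim~\ref{clm:AppBias} applies and the loss is at most an additional $\Pr[B]$. Your write-up is somewhat more explicit about the coupling with a valid family of promised oracles (and about the accuracy of the estimate at the replacement block $w_i$), which the paper leaves implicit.
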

\begin{proof}
Let $B$ be the event that for at least one of the queries of Construction \ref{const:Poly}, the promises are not satisfied. Namely,
\begin{align*}
B&= \Exists i, \Pr_{u_i \gets \uDist}\big[\gain{\pfix{v}{i-1},{u_i}}{} > \gainAppMax{v}{i-1} + 2\gamma\big] \leq \gamma\\
&~~\lor  \gainAppMax{u}{i-1} < -2\gamma\\
&~~\lor |\gainApp{\pfix{v}{i-1},u_i}{} - \gain{\pfix{v}{i-1},u_i}{}| \geq \gamma.
\end{align*}
During the whole course of process of Construction \ref{const:Poly}, there are exactly $n$, $\gainAppMax{\cdot}{}$ queries and exactly $\gainApp{\cdot}{}$ queries. Therefore, using Claims \ref{clm:gainApp}, \ref{clm:gainAppMax1} and \ref{clm:gainAppMax2} we have
$$\Pr[B] \leq n\cdot \gamma + n\cdot \gamma + n\cdot \frac{\gamma}{2}\leq 3\cdot n\cdot\gamma.$$
We know that conditioned on $\overline{B}$, the attack of Construction \ref{const:Poly} will be identical to the attack of Construction \ref{const:Semi-Poly} and increases the average to $1 -  \e^{\frac{-(\mu-2n\cdot \gamma)^2}{2n\cdot(\tau+4\gamma)^2}} - n\cdot \gamma$ by Claim \ref{clm:AppBias}. Therefore, the attack of Construction \ref{const:Poly} can unconditionally increases the average to at least 
$$1 -  \e^{\frac{-(\mu-2n\cdot \gamma)^2}{2n\cdot(\tau+4\gamma)^2}} - n\cdot \gamma - \Pr[B] \geq 1 -  \e^{\frac{-(\mu-2n\cdot \gamma)^2}{2n\cdot(\tau+4\gamma)^2}} - 4n\cdot \gamma.$$
\end{proof}

Now we state and prove another claim that bounds the expected number of tamperings performed by the attack of Construction \ref{const:Poly}

\begin{claim} \label{clm:EffTam}
For a tampering sequence $\vVec \gets \twist{\uVec}{\EffTam}$, let  $T_i = \Case_1(\pfix{v}{i-1}) \lor \Case_2(\pfix{v}{i-1},u_i)$ be the event (or equivalently the  Boolean function) denoting that a tampering choice is made by the adversary of Construction \ref{const:Poly} over the $i$'th block. If $T=\sum_{i=1}^n T_i$ denotes the total number of tamperings, then
$$\Ex[\TDist] \leq \frac{1 - \mu + n\cdot \gamma}{\tau-2\gamma}+ 3n^2\cdot \gamma.$$
\end{claim}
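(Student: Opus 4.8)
The plan is to bootstrap from Claim~\ref{clm:AppTam2} --- which already controls the expected number of tamperings of the idealised attacker $\AppTam$ of Construction~\ref{const:Semi-Poly} --- by reusing the exact bad-event decomposition employed in the proof of Claim~\ref{clm:EffBias}. Concretely, let $B$ denote the event (over $\uVec\gets\uDistVec$ together with the internal randomness of the oracle implementations) that at least one of the at most $n$ invocations of $\gainAppMax{\cdot}{}/\AppArgMax{\cdot}{}$ or of $\gainApp{\cdot}{}$ made during the run of $\EffTam$ on $\uVec$ violates the guarantee required of it by Definition~\ref{defs:biasingApproximate} (Property~A, Property~B, or the $\gamma$-accuracy of $\gainApp{\cdot}{}$). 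By Claims~\ref{clm:gainApp}, \ref{clm:gainAppMax1} and~\ref{clm:gainAppMax2} and a union bound over the $n$ blocks, $\Pr[B]\le n\gamma+n\gamma+n\gamma/2\le 3n\gamma$, exactly as in the proof of Claim~\ref{clm:EffBias}.

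Next I would couple $\EffTam$ with $\AppTam$: sampling $\uVec$ and all oracle randomness jointly, observe that on $\overline B$ every oracle answer produced along the execution is consistent with the promises of Construction~\ref{const:Semi-Poly}, so (completing the oracles off the execution path in any promise-respecting fashion) the run $\vVec\gets\twist{\uVec}{\EffTam}$ is, on $\overline B$, a bona-fide run $\twist{\uVec}{\AppTam}$. Writing $\TDist=\sum_{i=1}^n T_i$ for the tamperings of $\EffTam$ and $\TDist'$ for those of $\AppTam$ under this coupling, we have $\TDist\cdot\mathbf{1}_{\overline B}=\TDist'\cdot\mathbf{1}_{\overline B}$. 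Since $\TDist$ is a sum of $n$ indicators it is bounded by $n$ everywhere, so
$$\Ex[\TDist]=\Ex[\TDist\cdot\mathbf{1}_{\overline B}]+\Ex[\TDist\cdot\mathbf{1}_{B}]=\Ex[\TDist'\cdot\mathbf{1}_{\overline B}]+\Ex[\TDist\cdot\mathbf{1}_{B}]\le\Ex[\TDist']+n\cdot\Pr[B].$$
Applying Claim~\ref{clm:AppTam2} to bound $\Ex[\TDist']\le\frac{1-\mu+n\cdot\gamma}{\tau-2\gamma}$ together with $\Pr[B]\le 3n\gamma$ then yields $\Ex[\TDist]\le\frac{1-\mu+n\cdot\gamma}{\tau-2\gamma}+3n^2\cdot\gamma$, as claimed.

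The one step that deserves care --- and the only non-routine one --- is the coupling assertion that on $\overline B$ the polynomial-time attacker $\EffTam$ is literally carrying out the promise-oracle attack $\AppTam$, so that Claim~\ref{clm:AppTam2} may be invoked verbatim on $\TDist'$; this is the same manoeuvre made (tersely) inside the proof of Claim~\ref{clm:EffBias}, and I would simply spell it out a little more explicitly here. The remaining two ingredients --- the union bound $\Pr[B]\le 3n\gamma$ and the crude bound $\TDist\le n$ on the $B$-branch --- are immediate.
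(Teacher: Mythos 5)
Your proposal is correct and follows essentially the same route as the paper's proof: define the bad event $B$ as in the proof of Claim~\ref{clm:EffBias}, bound $\Pr[B]\le 3n\gamma$ by a union bound over the oracle calls, note that on $\overline B$ the execution of Construction~\ref{const:Poly} coincides with that of Construction~\ref{const:Semi-Poly} so Claim~\ref{clm:AppTam2} applies, and absorb the $B$-branch via the trivial bound $\TDist\le n$. Your write-up merely makes explicit the coupling step that the paper states tersely.
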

\begin{proof}
Define event $B$ similar to the proof of Claim \ref{clm:EffBias}. 
We know that conditioned on $\overline{B}$, the attack of Construction \ref{const:Poly} will be identical to the attack of Construction \ref{const:Semi-Poly} and uses average budget at most $\frac{1 - \mu + n\cdot \gamma}{\tau-2\gamma}$ by Claim \ref{clm:AppTam2}. Therefore, the attack of Construction \ref{const:Poly} will use average budget atmost
$$\frac{1 - \mu + n\cdot \gamma}{\tau-2\gamma} +n\cdot\Pr[B] \leq \frac{1 - \mu + n\cdot \gamma}{\tau-2\gamma} + 3n^2\cdot \gamma$$
\end{proof}

Finally, we prove  Theorem \ref{thm:ProdOnline} by relying on the probabilistic guarantees of the  estimators of the above constructions for the approximate oracles.
\begin{proof}[Proof of Theorem \ref{thm:ProdOnline}]
Let 
$$k=\ln\left(\frac{2}{1-\rho}\right) \text{ and } \tau = \frac{\mu}{1.9\sqrt{kn}} \text{ and } \gamma = \min \set{ \frac{\mu}{20n}, \frac{\mu}{80\sqrt{k\cdot n}}, \frac{1-\rho}{8\cdot n}, \frac{\sqrt{\ln(2/(1-\rho))}}{3n\sqrt{n}}}.$$ 
Also, let $\EffTam_{\tau,\gamma}$ be the attack of Construction \ref{const:Poly} instantiated with the parameters $\tau$ and $\gamma$ specified above. If $\vDistVec \equiv \twist{\uDist^n}{\EffTam}$, by Claim \ref{clm:EffBias} we have
\begin{align*}
\Ex[f(\vDistVec)] &\geq 1 -  \e^{\frac{-(\mu-2n\cdot \gamma)^2}{2n\cdot(\tau+4\gamma)^2}} - 4n\cdot \gamma\\
&\geq 1 -  \e^{\frac{-(0.9\mu)^2}{2n\cdot(1.1\tau)^2}} - \frac{1-\rho}{2}\\
&\geq 1 - \frac{1-\rho}{2} -\frac{1-\rho}{2}\\
&=\rho .
\end{align*}
On the other hand, By Claim \ref{clm:EffTam} we have
\begin{align*}
    \Ex[\TDist] &\leq \frac{1 - \mu + n\cdot \gamma}{\tau-2\gamma}+ 3n^2\cdot \gamma\\
    &\leq  \frac{1 - 0.95\mu}{0.95\tau} +\sqrt{n\cdot\ln(2/(1-\rho))}\\
    &\leq \frac{2 - 1.9\mu}{\mu}\cdot \sqrt{n\cdot\ln(2/(1-\rho))} + \sqrt{n\cdot\ln(2/(1-\rho))}\\
    &\leq \frac{2}{\mu}\cdot \sqrt{n\cdot\ln(2/(1-\rho))}.
\end{align*}
We also know that $\gamma = \omega({\mu\cdot(1-\rho)}/{n^2})$. Therefore, by Claims \ref{clm:gainAppTime} and \ref{clm:gainAppMaxTime} we conclude that the running time of $\EffTam_{\tau,\gamma}$ is $O({n^{12}\cdot \ell}/{(\mu\cdot(1-\rho))^5})$.
\end{proof}

\section{Conclusion and Some Questions}

In this work, we proved strong barriers against basing the robustness of learning algorithms in both settings of evasion attacks (who aim at fining adversarial examples) and poisoning attacks (who try to increase the misclassification probability by minimally tampering with the training data) on computational hardness assumptions. Namely, we showed that  a broad set of learning problems, and in particular  classification tasks whose instances are drawn from a product distribution of dimension $n$, are inherently vulnerable to \emph{polynomial-time} adversaries that find adversarial examples of (sublinear) Hamming distance $O(\sqrt n)$ and increase the classification error from $1\%$ to $99\%$.

Our proofs are based on a new coin tossing attack that is inspired by the recent results of \cite{RazCoin2018}. Our coin tossing attacks could be interpreted as a polynomial-time algorithmic proof of the well-known classical result of concentration measure in product distributions \cite{amir1980unconditional,milman1986asymptotic,mcdiarmid1989method,talagrand1995concentration}. While, concentration of measure guarantees that for any initial large enough set $\cS$, ``most'' points in the probability space are ``close'' to $\cS$, our algorithmic proof shows how to find such close instances in polynomial time. 

Our work motivates studying the following natural questions.

\paragraph{Other \emph{computationally} concentrated metric probability spaces (e.g., computational  \Levy families)?} Normal \Levy families include numerous metric probability spaces in which the measure is concentrated. Our Theorem \ref{thm:ProdOnline} shows that product distributions, as a special form of normal \Levy families are computationally concentrated. How about other spaces (e.g., isotropic Gaussian distribution under Euclidean distance) for which information theoretic concentration of measure is known? Proving any such results about any other metric probability space, directly leads to polynomial-time attacks against evasion robustness of any learning problem whose instances are drawn from those spaces. More generally, for what metric probability spaces can we prove nontrivial computational concentration of measure. Namely, a space would be computationally concentrated, if for any set $\cS$ with ``large enough'' measure, ``most'' of the points in the probability space can be \emph{efficiently} mapped to ``close'' points in $\cS$. Here, ``close'' shall be something that beats the trivial bound given by the diameter of the space. The latter question can be also interpreted as proving (approximate) ``computational isoperimetric'' inequalities. Namely, here the ``computational boundary'' of the set $\cS$ would be defined by a computationally bounded mapping that efficiently maps the points in ``computational distance'' $b$ to $\cS$. As in Theorem \ref{thm:ProdOnline}, access to the underlying probability distribution and membership in $\cS$ can be provided by   sampling and membership oracles.

\paragraph{Handling smaller initial error.} Our Theorem \ref{thm:ProdOnline} gives nontrivial attacks with sublinear $o(n)$ perturbation only if the initial average (or equivalently the probability of the target set $\cS$) is at least $ \omega(\log n / \sqrt n)$. However, the information theoretic attacks of \cite{mahloujifar2018curse} can handle original error that is only substantially $\exp(-o(n))$ large. What is the smallest probability $\mu$ for which we can obtain computational concentration of measure (for product or other distributions) if the target set has probability at least $\mu$? Solving this problem would have immediate impact on polynomial time evasion (and poisoning--in case of product distributions) attacks that attack systems with smaller error (in the no-attack case).

\newpage

\bibliographystyle{alpha}
\newcommand{\etalchar}[1]{$^{#1}$}

\end{document}